\def\eqref#1{equation~\ref{#1}}
\def\1{\bm{1}}
\def\vs{{\bm{s}}}
\DeclareMathAlphabet{\mathsfit}{\encodingdefault}{\sfdefault}{m}{sl}
\SetMathAlphabet{\mathsfit}{bold}{\encodingdefault}{\sfdefault}{bx}{n}
\numberwithin{table}{section}
\numberwithin{algorithm}{section}
\DeclareRobustCommand\onedot{\futurelet\@let@token\@onedot}
\def\@onedot{\ifx\@let@token.\else.\null\fi\xspace}
\def\ie{\emph{i.e}\onedot}
\def\vs{\emph{vs}\onedot}
\newcommand{\XX}{\boldsymbol X}
\newcommand{\VV}{\boldsymbol V}
\newcommand{\TT}{\boldsymbol T}
\newcommand{\EE}{\mathbb E}
\newcommand{\AAA}{\mathcal A}
\newcommand{\CCC}{\mathcal C}
\newcommand{\LLL}{\mathcal L}
\newcommand{\NNNN}{\mathbb N}
\renewcommand{\ss}{\boldsymbol s}
\newcommand{\QQQuni}{{\mathcal Q}_{\operatorname{1-gram}}}
\renewcommand{\tt}{\boldsymbol t}
\newcommand{\enc}{\operatorname{enc}}
\newcommand{\dict}{\operatorname{Dict}}
\newcommand{\flatt}{\operatorname{flat}}
\newcommand{\tok}{\operatorname{tok}}
\newcommand{\col}{\operatorname{col}}
\newcommand{\red}[1]{{\color{red} #1}}
\newtheorem{definition}{Definition}
\newtheorem{remark}{Remark}
\newtheorem{proposition}{Proposition}
\newtheorem{lemma}{Lemma}[section]
\newtheorem{case}{Scenario}
\title{From Pixels to Tokens: Byte-Pair Encoding on Quantized Visual Modalities}
\author{%
Wanpeng Zhang$^{1}$\thanks{Wangpeng Zhang, Yicheng Feng, and Yijiang Li work as interns at BAAI}, \  Zilong Xie$^{2}$, Yicheng Feng$^{1}$, Yijiang Li$^{3}$, Xingrun Xing$^{4,5}$, \\\textbf{Sipeng Zheng$^{4}$, Zongqing Lu$^{1}$}\thanks{Correspondence to Zongqing Lu \textless zongqing.lu@pku.edu.cn\textgreater.} \\
   $^1$School of Computer Science, Peking University \\
   $^2$The Chinese University of Hong Kong\\
   $^3$University of California, San Diego \\
   $^4$Beijing Academy of Artificial Intelligence \\
   $^5$Institute of Automation, Chinese Academy of Sciences
}
\begin{document}

\maketitle

\begin{abstract}

Multimodal Large Language Models have made significant strides in integrating visual and textual information, yet they often struggle with effectively aligning these modalities. We introduce a novel image tokenizer that bridges this gap by applying the principle of Byte-Pair Encoding (BPE) to visual data. Unlike conventional approaches that rely on separate visual encoders, our method directly incorporates structural prior information into image tokens, mirroring the successful tokenization strategies used in text-only Large Language Models. This innovative approach enables Transformer models to more effectively learn and reason across modalities. Through theoretical analysis and extensive experiments, we demonstrate that our BPE Image Tokenizer significantly enhances MLLMs' multimodal understanding capabilities, even with limited training data. Leveraging this method, we develop \texttt{Being-VL-0}, a model that demonstrates superior performance across various benchmarks and shows promising scalability, potentially paving the way for more efficient and capable multimodal foundation models.

\end{abstract}

\section{Introduction}

The development of Multimodal Large Language Models (MLLMs) has made significant progress \citep{yin2023survey, team2023gemini, liu2024visual}. However, these multimodal foundation models often model different modalities separately, incorporating many modality-specific designs such as specialized encoders and decoders \citep{liu2024visual, zhang2024vision, jin2023unified}. While this approach allows training data to align well with these modality-specific designs, it often struggles to achieve a unified understanding of multimodal information \citep{team2024chameleon}. The primary reason for this limitation could be that while encoders of other modalities can learn rich information, without the assistance of the corresponding decoders, LLMs cannot fully comprehend the complex patterns contained within the embeddings provided by the encoder. In other words, LLMs need to learn to interpret the token embeddings again, which is the job of the decoders of other modalities, leading to difficulties in aligning with these modalities \citep{baltruvsaitis2018multimodal}.

Recent research has begun to explore unified token-based representations for MLLMs \citep{team2024chameleon, lu2024unified, zheng2024unicode}, attempting to achieve better capabilities for multimodal information processing by moving away from modality-specific designs. This approach offers two main advantages: first, it can achieve unified information understanding, and second, it enables unified generation that can be decoded into different modalities. However, these methods simply convert various modal information into unified representations and then directly feed them into Transformer models \citep{vaswani2017attention}, hoping to learn the data just with massive computing resources. Although this approach can be effective to some extent, it consumes enormous computing resources and time while failing to achieve significant improvements in performance.

In comparison to the widely adopted training paradigm of text-only LLMs \citep{touvron2023llama, achiam2023gpt}, we observe that current token-based MLLMs often overlook a crucial component: the tokenizer that explicitly merges data sources. In text-only LLMs, the most widely adopted tokenizer is the Byte-Pair Encoding (BPE) tokenizer \citep{sennrich2015neural, radford2019language}, which learns to merge characters into tokens of varying lengths based on the word frequency in the training corpus before providing them to the Transformer for learning. While many believe this approach is solely for conserving computational resources, some recent theoretical and experimental analyses suggest that using merged tokens plays a vital role in the Transformer's learning of sequential data \citep{rajaraman2024toward, makkuva2024attention, merrill2024language}. In fact, if raw sequence data is provided directly, Transformer models may even fail to learn how to predict entirely.

In this paper, we reveal that Transformer models cannot effectively learn certain types of two-dimensional sequence data. However, when we apply operations similar to BPE tokenizers to the two-dimensional data, \ie, merging tokens based on the frequency of training data, the learning effectiveness can improve significantly. We believe the benefit comes from the addition of necessary structural prior information to the tokens during the merging process, making it easier for Transformer models to be aware of important relational information in the data during learning. Our theoretical analysis proves that tokenizing sequence data on certain types of two-dimensional data can indeed achieve smaller losses, which we further validate experimentally.

\begin{figure}[t]
    \centering
    \includegraphics[width=.98\textwidth]{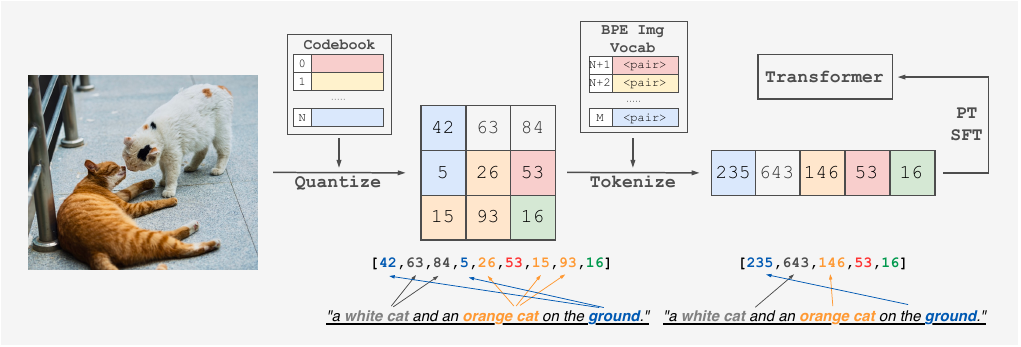}
    \caption{Illustration of our BPE Image Tokenizer. The overall process begins with quantizing the image into initial token IDs. The BPE Image Tokenizer then combines these tokens based on learned patterns, similar to text tokenizers. This combination results in tokens that inherently contain more semantic information. The final tokenized sequence thus incorporates structural prior information from the image, enabling the Transformer model to deeper comprehend the alignment between visual and textual information during training. This approach facilitates more effective integration of visual data into MLLMs, enhancing their multimodal understanding capabilities.}
    \label{fig:tokenize-example}
\end{figure}

Based on the insights provided by our theoretical analysis and experimental validation, we propose a new learning paradigm for MLLMs. By tokenizing the unified representation of multimodal data using a novel BPE Image Tokenizer, we enable Transformer models to better understand image data, allowing MLLMs built from text-only LLMs to have good multimodal capabilities. As illustrated in Figure \ref{fig:tokenize-example}, after quantizing the image into token IDs, the BPE Image Tokenizer further combines these token IDs according to its learned patterns. Similar to text tokenizers, our BPE Image Tokenizer ensures that the tokens being fed to the language model decoder inherently contain more semantically meaningful information in the statistical sense. Intuitively, the tokenized IDs, which have incorporated structural prior information from the image, could provide the Transformer model with better comprehension of the specific alignment between text and image in the training data. Preliminary results further validate the effectiveness of explicit tokenization for multimodal data. We hope that the new paradigm for learning MLLMs proposed in this paper will provide better guidance for subsequent scaling-up efforts, leading to the building of more powerful MLLMs. 

In summary, we make the following key contributions:

\begin{itemize}
    \item We are the first to propose a new MLLM learning paradigm that explicitly tokenizes multimodal data like text-only LLMs, centered around our novel BPE Image Tokenizer.
    \item We theoretically analyze why this learning paradigm can bring benefits and further provide corresponding experimental validation.
    \item We design an algorithm for training the BPE Image Tokenizer and train the \texttt{Being-VL-0} with this tokenizer. The performance evaluation further validates the capability enhancements this learning paradigm brings to MLLMs.
\end{itemize}

\section{Notations and Formulation}

Before delving into the theoretical analysis of our proposed paradigm, we introduce key notations and concepts used throughout this paper.

\textbf{Image Representation and Quantization.}
We represent an image as a set of patches $\mathcal{X} = \{X_{ij}\}_{1\leq i,j \leq m}$, where $m$ is the number of patches per row/column, and $X_{ij}$ denotes the patch at position $(i,j)$. We employ Vector Quantization (VQ) to discretize these patches, using a codebook $\mathcal{C}$ with size $C = |\mathcal{C}|$, and a quantization function $\text{VQ}: \mathbb{R}^d \rightarrow \mathcal{C}$.

\textbf{BPE Image Tokenizer.}
Our proposed BPE Image Tokenizer converts a quantized image into a sequence of token IDs, defined as $\enc: \mathcal{X} \rightarrow \mathcal{T}^*$, where $\mathcal{T}$ is the set of tokens and $\mathcal{T}^*$ is the set of all finite sequences of tokens. A special case of encoding is flattening, $\text{flat}: \mathcal{X} \rightarrow \mathcal{C}^{m^2}$, which arranges image patches into a sequence row by row.

\textbf{Unigram Model.}
For a unigram model $Q \in \QQQuni$, given a token sequence $\mathbf{t} = (t_1, ..., t_{|\mathbf{t}|})$, the probability is defined as:
\begin{equation}
Q(\mathbf{t}) = Q_{\#}(|\mathbf{t}|)\prod_{r=1}^{|\mathbf{t}|} Q_{\text{tok}}(t_r),
\end{equation}
where $Q_{\#}$ is a distribution over $\mathbb{N}$ representing the probability of the sequence length, and $Q_{\text{tok}}$ is a distribution over the dictionary of tokens $\mathcal{T}$.

\textbf{Multimodal Large Language Model (MLLM).}
We define an MLLM as a probabilistic model $Q$ over token sequences, capable of processing both text and image data as follows:

\begin{enumerate}
    \item Input Processing: $\mathbf{t}_{\text{text}} = \text{tokenize}(\mathbf{x}_{\text{text}})$; $\mathbf{t}_{\text{image}} = \enc(\text{VQ}(\mathbf{x}_{\text{image}}))$.
    \item Sequence Modeling: $P(t_i | t_1, ..., t_{i-1}) = Q(t_i | \mathbf{t}_{<i})$.
    \item Generation: $y_i \sim P(t_i | y_1, ..., y_{i-1}, \mathbf{t})$
\end{enumerate}

The training objective for an MLLM is to minimize the cross-entropy loss:
\begin{equation}
\mathcal{L} = -\mathbb{E}_{(\mathbf{x}, \mathbf{y}) \sim D} \left[ \sum_{i=1}^{|\mathbf{y}|} \log P(y_i | y_{<i}, \mathbf{t}) \right],
\end{equation}
where $D$ is a dataset of input-output pairs $(\mathbf{x}, \mathbf{y})$. The goal is to find the optimal parameters $\theta^*$ that minimize this loss: $\theta^* = \arg\min_{\theta} \mathcal{L}(\theta)$.

\section{Theoretical Analysis}

Previous studies have demonstrated that Transformers struggle to effectively learn certain one-dimensional sequence data \citep{rajaraman2024toward, makkuva2024attention}. In this section, we extend this concept to two-dimensional image data, considering a simplified model where the image data-generating distribution follows a 2D $k^{th}$-order Markov process. This simplified model is defined as follows:

\begin{definition}[2D $k^{th}$-order Markov process] For each variable $X_{i,j}$, with probability $\frac{1}{2}$, it depends only on $X_{i-k,j}$, i.e., $Pr(X_{i,j}=1|X_{i-k,j}=0)=p$ and $Pr(X_{i,j}=1|X_{i-k,j}=1)=1-q$; With probability $\frac{1}{2}$, it depends only on $X_{i,j-k}$, i.e., $Pr(X_{i,j}=1|X_{i,j-k}=0)=p$ and $Pr(X_{i,j}=1|X_{i,j-k}=1)=1-q$. Figure \ref{fig:markov} illustrates this process.
\end{definition}

\begin{figure}[t]
\centering
\subfigure[2D $k^{th}$-order Markov process. Sequence data is transformed based on horizontal or vertical $k^{th}$-order Markov conditional probabilities.]{
\label{fig:markov}
{\resizebox{.22\linewidth}{!}{\raisebox{0.6cm}{
\begin{tikzpicture}[
    node distance=1.5cm,
    state/.style={circle, draw, fill=gray!20, minimum size=0.9cm},
    transition/.style={->, bend left=30}
]
    \node[state] (0) {0};
    \node[state, right=of 0] (1) {1};
    \node[state, below=of 0] (1b) {1};
    
    % horizontal
    \draw[transition, bend right=30] (0) to node[below] {$\frac{p}{2}$} (1);
    \draw[transition, bend right=30] (1) to node[below] {$\frac{q}{2}$} (0);
    
    % verticle
    \draw[transition] (0) to node[left] {$\frac{p}{2}$} (1b);
    \draw[transition] (1b) to node[right] {$\frac{q}{2}$} (0);
\end{tikzpicture}}}}
}
\hfill
\subfigure[Without tokenizer, Transformer fails to learn 2D $k^{th}$-order Markov sequence data, only achieving the performance of a unigram model (dotted line). The gray area represents Transformer under various parameters]{
\label{fig:wo-tok}
{\includegraphics[width=.34\linewidth]{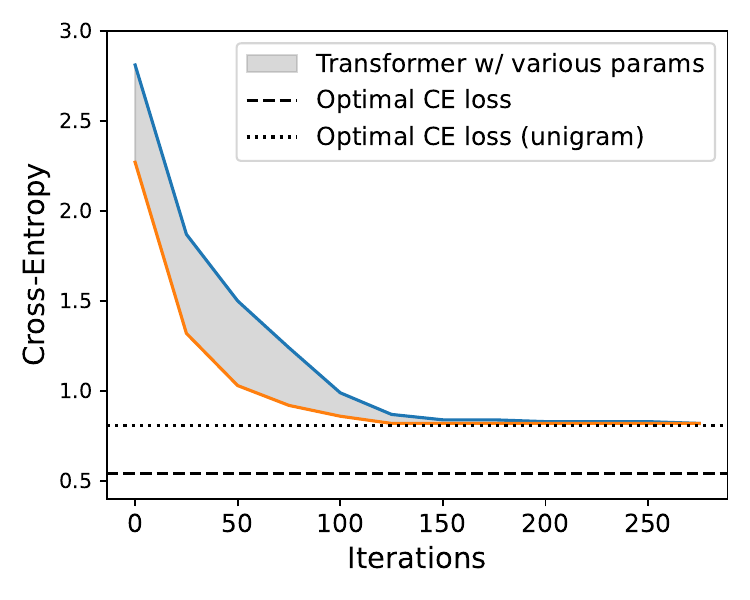}}
}
\hfill
\subfigure[With BPE tokenizer, even Transformer models with only a few layers (less parameters) can easily learn the 2D $k^{th}$-order Markov sequence data, achieving optimal cross-entropy loss (dashed line).]{
\label{fig:w-tok}
{\includegraphics[width=.34\linewidth]{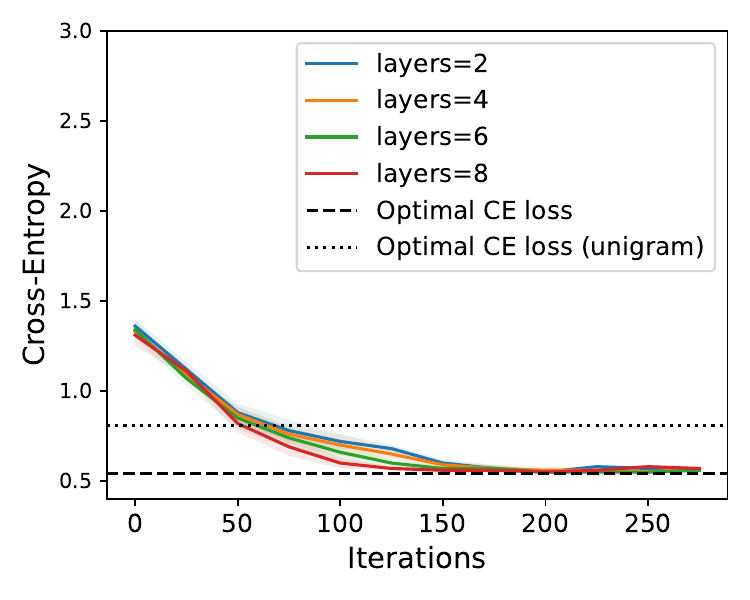}}
}
\caption{Definition of 2D $k^{th}$-order Markov sequence data, and the performance of Transformer in learning such sequence data with or without tokenizer. For details of the hyperparameters used in the experiments, please refer to section \ref{sec:implementation}.}
\label{fig:validation-exp}
\end{figure}

This simplification is intuitive, as pixels in an image often exhibit conditional dependencies with other pixels at specific horizontal and vertical distances. Consequently, real-world image data can be viewed as a composite of multiple such simplified processes.
When attempting to learn such sequence data directly using a Transformer model, an interesting phenomenon occurs: the Transformer fails to surpass the performance of the stationary unigram model, regardless of various hyperparameter choices. As shown in Figure \ref{fig:wo-tok}, the Transformer fails to improve the cross-entropy loss beyond that of the optimal unigram model on the training data (indicated by the dotted line). However, as shown in Figure \ref{fig:w-tok}, when encoding the sequences in each direction separately using BPE Tokenizer trained on this distribution, even a minimal Transformer model with only a few layers can readily achieve optimal loss. This finding suggests that BPE Tokenizer could play a crucial role in the Transformer's learning process for the defined two-dimensional sequence data.

To formally analyze this phenomenon in a more general setting, we first let $m$ be the number of patches per row in an square image, and let $X_{ij}$ denote the patch at position $(i, j)$ for $1 \leq i, j \leq m$. An image can be represented by the set of $m^2$ patches $\{ X_{ij}\}_{1\leq i, j \leq m}$, which we abbreviate as $\XX$ for simplicity. Each patch can be encoded into an index with a VQ-GAN model's codebook, denoted by $\CCC$, with $C = |\CCC|$ representing the size of the codebook.

We then consider two scenarios for image generation:
\begin{case}\label{case:column-wise}
Every column in the image $\XX$ is generated independently by an ergodic Markov process supported on $\CCC$, proceeding from top to bottom, with the same transition kernel $P(\cdot\mid\cdot)$ and stationary distribution $\pi$.
\end{case}
\begin{case}\label{case:row-wise}
Every row in the image $\XX$ is generated independently by an ergodic Markov process supported on $\CCC$, proceeding from left to right, with the same transition kernel $P(\cdot\mid\cdot)$ and stationary distribution $\pi$.
\end{case}
Given a model $Q$ on tokens and an encoding function $\enc(\cdot)$ that encodes an image into a sequence of tokens, we define the cross entropy loss as
$$ \LLL_m(Q \circ \enc)  = -\frac{1}{m^2} \EE[\log(Q(\enc(\XX)))],$$
where the subscript $m$ indicates the dependence on size $m$.
We also introduce the trivial encoding function $\flatt(\cdot)$, which merely flattens an image into a sequence of patches arranged row by row from left to right:
$$\flatt(\XX) = (X_{11}, X_{12}, ...X_{1m}, X_{21}, X_{22}, ..., X_{2m}, ..., X_{m1}, X_{m2}, ..., X_{mm}).$$
Using this notation, $\LLL_m(Q \circ \flatt)$ represents the cross entropy loss for model $Q$ along with the encoding function $\flatt(\cdot)$. For simplicity, we abbreviate it as $\LLL_m(Q)$ throughout this paper.
We define $H(p)$ as the Shannon entropy for any discrete distribution $p$:
$$H(p) = -\sum_{y \in \operatorname{supp}(p)} p(y) \log p(y).$$
Now, we present our main theoretical results:
\begin{proposition}\label{prop:1}
For data generating processes described in either Scenario \ref{case:column-wise} or Scenario \ref{case:row-wise},
as $m\to\infty$,
the optimal cross-entropy loss among unigram model family $\QQQuni$ satisfies
\begin{equation}\label{eq:unigramlower}
\liminf_{m\to\infty}\min_{Q\in\QQQuni} \LLL_m(Q) \geq H(\pi) = \sum_{a\in\CCC} \pi(a)\log(\pi(a)).
\end{equation}
In contrast, the optimal unconstrained cross entropy loss satisfies
\begin{equation}\label{eq:optimallower}
\begin{aligned}
\lim_{m\to\infty}\min_{Q}  \LLL_m(Q) &= H_{\infty} \stackrel{\Delta}{=} -\sum_{a\in\CCC} \sum_{a^\prime\in\CCC} \pi(a) P(a^\prime\mid a)\log\left(P(a^\prime\mid a)\right).
\end{aligned}
\end{equation}
\end{proposition}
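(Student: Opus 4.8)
The plan is to prove the two displayed bounds separately, disposing of the unconstrained optimum first since it is the cleaner computation. For equation \ref{eq:optimallower}, note that for each fixed $m$ the cross entropy $\LLL_m(Q)$ over all distributions $Q$ is minimized by taking $Q$ equal to the true law of $\flatt(\XX)$, so that $\min_Q \LLL_m(Q) = \frac{1}{m^2}H(\flatt(\XX)) = \frac{1}{m^2}H(\XX)$, the normalized joint entropy (entropy is invariant under the reordering $\flatt$ performs). In Scenario \ref{case:column-wise} the $m$ columns are independent, so $H(\XX)$ splits into $m$ single-column entropies; each column is a length-$m$ trajectory of the chain, and the chain rule together with the Markov property gives $H(\text{column}) = H(\pi) + (m-1)H_{\infty}$, where $H_{\infty} = -\sum_{a}\pi(a)\sum_{a'}P(a'\mid a)\log P(a'\mid a)$ is exactly the per-step conditional entropy under stationarity. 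Hence $\frac{1}{m^2}H(\XX) = \frac{1}{m}H(\pi) + \frac{m-1}{m}H_{\infty} \to H_{\infty}$. Scenario \ref{case:row-wise} is identical with rows in place of columns (there $\flatt$ even preserves the chain order), giving the same limit.

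For the unigram bound in equation \ref{eq:unigramlower}, I would exploit that under $\flatt$ the encoded length is deterministically $m^2$, so for $Q\in\QQQuni$ the length factor contributes $-\frac{1}{m^2}\log Q_{\#}(m^2)\geq 0$ and the remaining term factorizes over positions. Writing $\mu_r$ for the marginal law of the patch at flattened position $r$ and $\bar\mu_m = \frac{1}{m^2}\sum_{r=1}^{m^2}\mu_r$ for the position-averaged marginal, this yields $\LLL_m(Q)\geq -\sum_{a\in\CCC}\bar\mu_m(a)\log Q_{\tok}(a) = H(\bar\mu_m) + \KL(\bar\mu_m \,\|\, Q_{\tok})\geq H(\bar\mu_m)$, with equality attainable by putting a point mass at length $m^2$ and setting $Q_{\tok}=\bar\mu_m$. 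Thus $\min_{Q\in\QQQuni}\LLL_m(Q)=H(\bar\mu_m)$: the entire Markov dependence is invisible to the unigram family, which sees only the averaged marginal. To pass to the limit I would show $\bar\mu_m\to\pi$: each column (resp. row) is ergodic with stationary law $\pi$, so its time-averaged marginal converges to $\pi$ by the Markov ergodic theorem (and equals $\pi$ exactly if started from $\pi$), and averaging the identically distributed columns preserves this. Since $\CCC$ is finite, $p\mapsto H(p)$ is continuous on the simplex, so $H(\bar\mu_m)\to H(\pi)$ and $\liminf_m \min_{Q\in\QQQuni}\LLL_m(Q) = H(\pi)$, which in fact holds with equality and a fortiori gives the stated inequality.

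The hard part will be the two structural observations rather than any single estimate: first, that the optimal unigram loss collapses to a function of the single averaged marginal $\bar\mu_m$ (so that the family is blind to conditional structure), and second, the ergodic convergence $\bar\mu_m\to\pi$. The latter is the only genuinely analytic step; under an explicit stationary-start assumption each $\mu_r=\pi$ and the unigram bound holds with equality for every $m$, whereas for a general ergodic start one invokes Cesàro convergence of the chain's marginals. The conceptual payoff, which I would state explicitly, is that the two limits differ by $H(\pi)-H_{\infty} = I(X_{i-1};X_i)\geq 0$, strictly positive unless $P(\cdot\mid\cdot)$ is independent of its argument; this gap is precisely the advantage the unconstrained (and, later, the tokenized) model enjoys over any unigram model.
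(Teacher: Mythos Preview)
Your proposal is correct and follows essentially the same route as the paper: for \eqref{eq:optimallower} both you and the paper identify $\min_Q\LLL_m(Q)=\tfrac{1}{m^2}H(P_m^*)$ and compute $H(P_m^*)=mH(\pi)+m(m-1)H_\infty$ via column independence and the chain rule; for \eqref{eq:unigramlower} both drop the nonnegative length term and apply Gibbs' inequality to the per-position cross-entropy. The only difference is that the paper treats the chain as started in stationarity (so every marginal is exactly $\pi$ and the bound is immediate), whereas you carry the averaged marginal $\bar\mu_m$ and invoke ergodic Ces\`aro convergence---a harmless extra layer of generality that the paper's setup does not actually require, as you yourself note.
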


Here we provide a sketch of the proof of Proposition \ref{prop:1}. For the full proof, please see Appendix \ref{sec:full-proof}.

\begin{proof}[Sketch of Proof]
For the optimal cross-entropy loss among $\QQQuni$, observe that under both scenarios, the marginal distribution of each patch is precisely the stationary distribution $\pi$. When applying a unigram model to $\flatt(\XX)$ and leveraging its inherent independence, the optimal loss closely approximates $H(\pi)$, with a discrepancy that vanishes as $m \to \infty$.
Regarding the optimal unconstrained cross-entropy loss, note that the best model is simply the generating model of $\flatt(\XX)$, with its cross-entropy loss converging to $H_{\infty}$ as $m\to\infty$.
\end{proof}

% \begin{remark}\label{remark:gap}
% As analyzed in \citep{rajaraman2024toward}, the ratio of $H(\pi)$ and $H_{\infty}$ can be made arbitrarily large. See Example 2.2 in \citep{rajaraman2024toward}.
% \end{remark}

\begin{remark}\label{remark:gap}
Consider the 2D $k^{th}$-order Markov process in Figure \ref{fig:markov}. Let $\frac{p}{2}=\frac{q}{2}=\frac{1-\delta}{2}$, which means switching between $0$ and $1$ with equal probability $p=q=1-\delta$. For this process, $\lim_{m\to\infty}\frac{1}{m}H(P) = \delta\log(\frac{1}{\delta})+(1-\delta)\log(\frac{1}{1-\delta})$. However, the unigram stationary distribution is $\pi=\{\frac{1}{2}, \frac{1}{2}\}$, and so $H(\pi) = \log(2)$. The ratio $\lim_{m\to\infty}\frac{H(\pi)}{\frac{1}{m}H(P)}$ approaches $\infty$ as $\delta\to 0$. This example shows that the gap between $H(\pi)$ and $H_\infty$ could be arbitrarily large.
\end{remark}

The implications of Proposition \ref{prop:1} and Remark \ref{remark:gap} are profound for understanding the limitations of simple models in capturing the structure of two-dimensional image data. This result shows a significant performance gap between unigram models and the optimal unconstrained model for such data. This gap, represented by the difference between $H(\pi)$ and $H_{\infty}$, can be arbitrarily large. This finding underscores a critical limitation of simple unigram models, which Transformers could default to when processing raw sequences, according to our earlier experimental results. These models are inherently restricted in their ability to capture the true structure of image data, suggesting that more sophisticated tokenization methods are necessary for effectively processing two-dimensional image data with Transformer models. We further discuss the potential information loss of our tokenizer in Appendix \ref{sec:info-loss}.

\begin{proposition}\label{prop:tokenlower}
     For data generating processes described in either Scenario \ref{case:column-wise} or Scenario \ref{case:row-wise}, assume that $\delta \stackrel{\Delta}{=} \min_{a, a^\prime \in \CCC} P(a^\prime|a) > 0$.  Then there exists a tokenizer with a dictionary containing at most $D$ tokens, along with an encoding function $\enc(\cdot)$ applied to $\XX$, such that
    \begin{equation}\label{eq:imagetoken}
        \limsup_{m\to\infty} \min_{Q\in\QQQuni} \LLL_m(Q \circ \enc(\cdot)) \leq \frac{1}{1-\varepsilon}H_{\infty},
    \end{equation}
    where $\varepsilon = \log(1/\delta)/(0.99\log(D))$ and $D \in \NNNN$ is an arbitrary constant that is sufficiently large.
\end{proposition}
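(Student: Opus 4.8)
The plan is to reduce the two–dimensional statement to a one–dimensional tokenization problem and then build a BPE–style dictionary tuned to the kernel $P$, in the spirit of the 1D analysis of \citep{rajaraman2024toward}. Consider Scenario \ref{case:column-wise} (Scenario \ref{case:row-wise} is symmetric under transposition): the $m$ columns are i.i.d.\ copies of an ergodic length-$m$ Markov chain with kernel $P$ and stationary law $\pi$. I would take $\enc(\cdot)$ to read $\XX$ column by column and apply one fixed tokenizer to each column independently. Because the columns are i.i.d.\ and the per-column token statistics coincide, the optimal unigram loss factorizes, so as $m\to\infty$ the quantity $\LLL_m(Q\circ\enc)$ converges to the per-symbol unigram loss of tokenizing a single stationary chain, namely $H(\mu)/\EE_\mu[\ell]$, where $\mu$ is the stationary law of the tokens produced by the parser and $\ell$ is the token length measured in codebook symbols. (The length factor $Q_{\#}$ contributes only $O((\log m)/m^2)\to 0$ and is absorbed separately.) It therefore suffices to exhibit a dictionary of at most $D$ tokens for which $H(\mu)/\EE_\mu[\ell]\le H_\infty/(1-\varepsilon)$, with $H_\infty$ the optimal unconstrained loss from Proposition \ref{prop:1}.

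For the dictionary I would use a threshold/trie construction: grow the prefix tree of substrings and declare a string $t$ a token (leaf) exactly when its stationary probability first satisfies $P(t)\le \theta < P(\mathrm{parent}(t))$, with threshold $\theta = 1/(\delta D)$. Since every transition obeys $P(a'\mid a)\ge\delta$, each leaf satisfies $1/D < P(t)\le 1/(\delta D)$; the leaves form a complete prefix code, so $\sum_t P(t)=1$, the induced token process is stationary, and the bound $P(t)>\theta\delta = 1/D$ together with $\sum_t P(t)=1$ forces at most $D$ leaves. This construction yields two estimates, each governed by a single boundary symbol. First, from $P(t)\le 1/(\delta D)$ we get $\EE_\mu[-\log P(t)]\ge \log D-\log(1/\delta)$; combining this with the renewal identity $\EE_\mu[-\log P(t)] = H_\infty\,\EE_\mu[\ell] + (\text{boundary correction})$, where the correction from replacing the inter-token transition by $\pi$ on the first symbol is at most $\log(1/\delta)$, gives the length lower bound $H_\infty\,\EE_\mu[\ell] \ge \log D - 2\log(1/\delta)$. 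Second, the entropy-rate decomposition $H(\mu) = H(T_1) = H_\infty\,\EE_\mu[\ell] + I(T_1; T_{\ge 2})$, together with the Markov property (the past influences the next token only through one boundary symbol), gives $I(T_1;T_{\ge 2})\le \log C\le \log(1/\delta)$, hence $H(\mu)\le H_\infty\,\EE_\mu[\ell]+\log(1/\delta)$.

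Combining the two estimates yields $H(\mu)/\EE_\mu[\ell] \le H_\infty + \log(1/\delta)/\EE_\mu[\ell] \le H_\infty\cdot (\log D-\log(1/\delta))/(\log D - 2\log(1/\delta)) = H_\infty/(1-\varepsilon')$ with $\varepsilon' = \log(1/\delta)/(\log D - \log(1/\delta))$. Writing $\eta = \log(1/\delta)/\log D$, one checks $\varepsilon' = \eta/(1-\eta) \le \eta/0.99 = \varepsilon$ whenever $\eta\le 0.01$, i.e.\ for $D$ sufficiently large relative to $1/\delta$; the $0.99$ slack in the definition of $\varepsilon$ is precisely the room that absorbs the lower-order $\log(1/\delta)$ corrections, the constant-factor slack in the token count, and the finite-$m$ error. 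The hard part is making the two renewal/entropy-rate identities rigorous: both estimates rest on the claim that parsing the stationary chain yields a stationary token process whose per-token entropy rate equals $H_\infty\,\EE_\mu[\ell]$ and whose cross-token dependence is confined to the boundary symbol. Establishing this cleanly (via the ergodic theorem and Palm calculus for the renewal structure, and verifying that the deviation from the $m\to\infty$ limit vanishes uniformly so the $\limsup$ is controlled) is the technical core; once it is in place, the column-wise reduction and the arithmetic above close the argument.
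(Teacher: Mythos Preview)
Your column-wise reduction to the one-dimensional problem is exactly the paper's approach: apply a fixed tokenizer to each column (row in Scenario~\ref{case:row-wise}), concatenate, and observe that the length component of the unigram loss contributes a term of order $(\log m)/m^2$ that vanishes, so only the per-token part matters. The paper makes this explicit by choosing $\tilde{Q}_m$ with $Q_m^{\operatorname{unif}}$ uniform on $\{1,\ldots,m^2\}$ for the length factor and the same $Q_m^{\tok}$ for the tokens, then summing over the $m$ identical columns.

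Where you diverge is that the paper does \emph{not} reconstruct the one-dimensional tokenizer. It simply invokes Theorem~4.1 of \citep{rajaraman2024toward} as a black-box lemma: that result already furnishes, for any ergodic chain with $\min P(a'\mid a)=\delta>0$, a dictionary of size at most $D$ and an encoding $\enc_{\col}$ with
\[
\limsup_{m\to\infty}\frac{1}{m}\,\EE\!\left[\sum_{t\in\enc_{\col}(\XX_{:j})}\log\bigl(1/Q_m^{\tok}(t)\bigr)\right]\le \frac{1}{1-\varepsilon}H_\infty.
\]
The two-dimensional proposition then follows in a few lines. Your threshold/trie construction, the renewal identity $\EE_\mu[-\log P(t)]\approx H_\infty\,\EE_\mu[\ell]$, and the mutual-information bound $I(T_1;T_{\ge 2})\le \log C\le \log(1/\delta)$ are essentially a sketch of the \emph{proof} of that lemma rather than of the proposition at hand. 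The arithmetic you carry out (leading to $\varepsilon'=\eta/(1-\eta)\le \eta/0.99=\varepsilon$ for $\eta\le 0.01$) is consistent with the cited result, and you correctly identify that establishing stationarity of the parsed token process and controlling the boundary corrections is where the real work lies---but all of that work is already packaged in the cited theorem. So your proposal is correct in outline; it is just doing strictly more than the paper requires, re-proving the one-dimensional lemma instead of citing it.
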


Here we provide a sketch of the proof of Proposition \ref{prop:tokenlower}. For the full proof, please see Appendix \ref{sec:full-proof}.

\begin{proof}[Sketch of Proof]
The proof leverages the fact that in both scenarios, the generated image is composed of $m$ parallel Markov sequences, each of length $m$, sharing the same transition kernel and stationary distribution. We first apply the result from the one-dimensional case (Lemma \ref{lem:main}) to each column (or row) separately. Then, we construct a tokenizer for the entire image by concatenating the tokens from each column (or row). By carefully bounding the cross-entropy loss of this constructed tokenizer, we show that it satisfies the desired inequality. The key step is to handle the additional complexity introduced by the two-dimensional structure while maintaining the bound derived from the one-dimensional case.
\end{proof}

Proposition \ref{prop:tokenlower} offers insights into the potential benefits of using a tokenizer for image data. This result demonstrates that with an appropriate tokenizer, we can achieve a loss arbitrarily close to the optimal unconstrained loss $H_{\infty}$, significantly outperforming the unigram model bound of $H(\pi)$. These theoretical results motivate the design of our BPE Image Tokenizer. In the following sections, we will describe the implementation of our algorithm and present corresponding experimental results.

\section{Preliminary Implementation}

% To further validate whether tokenizing multimodal data can enhance LLMs' understanding of information beyond the text modality, we design a simplified BPE algorithm for two-dimensional images and extend the image modality comprehension capabilities of the Llama-3.1-8B model. Given the scope of this paper, we emphasize that our focus is on validating and analyzing the benefits of designing a tokenizer that explicitly merges image tokens. As a result, this implementation for training a MLLM with the designed tokenizer is preliminary and does not involve extensive scaling in terms of dataset size or computational resources.

To validate the effectiveness of our proposed BPE Image Tokenizer and the training paradigm in enhancing models' multimodal understanding capabilities, we implemented the entire procedure and conducted a series of experiments. Given the scope of this paper, we emphasize that this implementation for training an MLLM is preliminary and does not involve extensive scaling in terms of dataset size or computational resources to directly compare with state-of-the-art foundation models.

\subsection{BPE Image Tokenizer Training}

\begin{algorithm}[t]
\caption{BPE Image Tokenizer training procedure.\label{algo:2D-BPE}}
\begin{algorithmic}[1]
\State Input $v_0, m, D$. \Comment{$v_0$: initial vocab size, $m$: new vocab size, $D$: training data}
    \State $v \gets v_0$ \Comment{$v$: current vocab size}
    \State $A \gets \text{zeros}(v \times v)$ \Comment{$A$: adjacency matrix}
    \State $V \gets \emptyset$ \Comment{$V$: extended vocabulary}
    
    \For{$i \gets 1$ to $m$}
        \State $A \gets \text{UpdateMatrix}(D)$
        \State $(p, f) \gets \text{MaxFreqPair}(A)$ \Comment{$p$: best pair, $f$: frequency}
        \If{$f = 0$}
            \textbf{break}
        \EndIf        
        \State $V \gets V \cup \{(p, v)\}$
        \State $D' \gets \emptyset$
        \For{each $d \in D$}
            \State $d' \gets \text{Replace } p \text{ with } v \text{ in } d$
            \State $D' \gets D' \cup \{d'\}$
        \EndFor
        \State $D \gets D'$
        \State $v \gets v + 1$ \Comment{set next id for new token}
    \EndFor
    
    \State \Return $V$
\end{algorithmic}
\end{algorithm}

\textbf{Dataset Preparation:} We constructed a diverse image dataset comprising 2.78 million images from ImageNet \citep{deng2009imagenet}, CC \citep{sharma2018conceptual}, LAION \citep{schuhmann2022laion}, and SBU \citep{ordonez2011im2text}. We applied a filtering mechanism similar to LLaVA \citep{liu2024visual}, selecting images based on their class labels or noun-phrase statistics derived from captions to keep the concept diversity.

\textbf{Image Preprocessing:} We used a pretrained VQ-GAN model released by Chameleon \citep{team2024chameleon} to preprocess the images. This VQ-GAN model uses a codebook size of 8192 and quantizes images of any size into a 1024-dimensional ID tensor.

\textbf{Tokenizer Training Algorithm:} We developed the BPE image tokenizer training algorithm, inspired by the text BPE algorithm. The main procedure of this algorithm is presented in Algorithm \ref{algo:2D-BPE}, with supporting functions detailed in Algorithms \ref{algo:update-matrix} and \ref{algo:freq-pair} (see Appendix \ref{sec:implementation}). Key designs of our algorithm include: 1) Use of an adjacency matrix to count token co-occurrences; 2) Simultaneous consideration of horizontal and vertical token combinations; 3) Ability to merge tokens into arbitrary shapes through iterative merging.
% \begin{itemize}
%     \item Use of an adjacency matrix to count token co-occurrences.
%     \item Simultaneous consideration of horizontal and vertical token combinations.
%     \item Ability to merge tokens into arbitrary shapes through iterative merging.
% \end{itemize}

\textbf{Extended Vocabulary:} The training process results in an extended vocabulary that merges tokens based on learned patterns. To analyze the impact of vocabulary size on performance, we trained multiple tokenizer versions with extended vocabulary sizes ranging from 1024 to 16384.

\subsection{MLLM Training}

\textbf{Base Model:} We used the \texttt{Llama-3.1-8B} model \citep{dubey2024llama} as our base text LLM.

\textbf{Token Embedding Expansion:} Before training the MLLM, we first expanded the token embedding layers of the base model to accommodate the new image token IDs. For instance, with 8192 VQ-GAN token IDs and a BPE Image Tokenizer vocabulary of size 4096, we expanded the embedding layers from $n \times m$ to $(n + 8192 + 4096) \times m$, where $n$ is the original text embedding dimension and $m$ is the number of embedding layers. We also added extra special tokens to mark the beginning and end positions of images.

\textbf{Training Process:} The training process consisted of two stages:

\begin{itemize}
    \item \textit{Stage 1: Image Understanding Pretraining (PT).} In this stage, we froze the original text token embeddings and trained only the new image token embeddings using image caption data. This stage used 595K images from CC-3M \citep{sharma2018conceptual} and 558K from LCS \citep{liu2024visual}.
    \item \textit{Stage 2: Supervised Fine-Tuning (SFT).} After stage 1, we unfroze all weights and performed full-parameter fine-tuning using complex conversation data with image inputs. This stage used 1.27 million entries from the LLaVA-OneVision Dataset \citep{li2024llava}, including General QA (504K), Doc \& Chart (249K), Reasoning (343K), and OCR (180K) tasks.
\end{itemize}

The core distinction between our proposed training pradigm and existing widely used MLLM training approaches \citep{liu2024visual, liu2024improved} mainly lies in the first stage of pretraining. The conventional approaches typically align a pretrained CLIP encoder \citep{radford2021learning} with a text-based LLM, where the image understanding capability is largely derived from the CLIP encoder, which was trained on 400 million image-text pairs \citep{radford2021learning}, far exceeding the amount of data we provide to our model for image understanding. Though our proposed paradigm also involves pretraining an image tokenizer, it does not directly provide image comprehension capability to the tokenizer. Instead, the MLLM acquires its entire image comprehension capability after the first stage of training. Intuitively, this approach allows for a more direct fusion of the image modality with the text-based LLM, potentially mitigating biases introduced by separate training and subsequent alignment. Our experimental results also can demonstrate that even with a significantly limited amount of data compared to that used in pretraining the CLIP encoder, we can still endow text-based LLMs with good image understanding capabilities.

\section{Experimental Results and Analysis}

Our experiments aimed to evaluate the effectiveness of the proposed BPE Image Tokenizer and its impact on MLLM performance. We analyze the results from three main perspectives: the impact of the BPE Image Tokenizer, the effect of additional data scaling, and the influence of extended vocabulary. Table \ref{tab:eval} summarizes our main results, while Figures \ref{fig:vocab-size-ablation} and \ref{fig:weights-vis} provide insights into the impact of the BPE vocabulary and its token usage patterns.

\subsection{Experiments Setup}

\textbf{Benchmarks:} We evaluated our model using multiple benchmarks:

\begin{itemize}
    \item VQAv2 \citep{goyal2017making}: Visual Question Answering
    \item MMBench \citep{liu2023mmbench}: Multimodal Understanding
    \item MME \citep{fu2024mme}: Multimodal Evaluation (separate tests for perception MME$^p$ and cognition MME$^c$)
    \item POPE \citep{Li-hallucination-2023}: Object Hallucination Evaluation
    \item VizWiz \citep{gurari2018vizwiz}: Visual Question Answering for Visually Impaired Users
\end{itemize}

\textbf{Data Scaling Experiments:} To further investigate the impact of dataset size on performance, we gradually augment our training data with RefCOCO (50.6K) \citep{kazemzadeh2014referitgame}, AOKVQA (66.2K) \citep{schwenk2022okvqa}, ShareGPT4o (57.3K) \citep{chen2023sharegpt4v}, and ALLaVA Inst (70K) \citep{chen2024allava}.

\begin{table}[t]
\centering
\caption{Performance comparison of different settings and training strategies across multiple benchmarks. Consistent improvements across all benchmarks demonstrate the benefit from the BPE Image Tokenizer and the effectiveness of the designed training strategy. Further performance gains are observed with incremental data additions, highlighting the potential scalability of our approach. Specifically, the `LLM' represents the \texttt{Llama-3.1-8B} backbone model we used. MME$^p$ and MME$^c$ represent MME-perception and MME-cognition, respectively.}
\label{tab:eval}
\vspace{2mm}
\resizebox{\textwidth}{!}{
\begin{tabular}{c|c|cccccc}
\toprule[1pt]
& \textbf{Training type} & \textbf{VQAv2} & \textbf{MMBench} & \textbf{MME}$^p$ & \textbf{MME}$^c$ & \textbf{POPE} & \textbf{VizWiz} \\
\midrule
\multirow{3}{*}{LLM+VQ} & SFT & 51.1 & 35.9 & 972.3 & 231.8 & 73.8 & 43.1 \\
 & PT(full)+SFT & 53.7 & 37.0 & 1037.2 & 261.4 & 75.3 & 44.2 \\
 & PT(freeze)+SFT & 55.4 & 37.6 & 1054.5 & 277.0 & 76.0 & 45.3 \\
\midrule
\multirow{3}{*}{LLM+VQ+BPE (\texttt{Being-VL-0})} & SFT & 52.2 & 35.4 & 1029.7 & 269.6 & 76.3 & 45.3 \\
 & PT(full)+SFT & 56.5 & 38.6 & 1144.6 & 284.3 & 77.3 & 45.8 \\
 & PT(freeze)+SFT & 57.1 & 40.9 & 1223.5 & 307.1 & 79.0 & 46.0 \\
\midrule
\rowcolor{gray!20} & +RefCOCO(50.6K) & 58.6 & 42.3 & 1257.4 & 314.3 & 79.8 & 47.1 \\
\rowcolor{gray!20} \multirow{-2}{*}{Additional scaling (PT)} & +AOKVQA (66.2K) & 59.6 & 43.1 & 1288.1 & 321.4 & 80.4 & 47.5 \\
\midrule
\rowcolor{gray!50} & +ShareGPT4o (57.3K) & 60.2 & 43.7 & 1304.5 & 327.7 & 80.9 & 47.8 \\
\rowcolor{gray!50} \multirow{-2}{*}{Additional scaling (SFT)} & +ALLaVA Inst (70K) & 60.6 & 44.0 & 1316.2 & 331.0 & 81.3 & 48.2 \\
\bottomrule[1pt]
\end{tabular}
}
\end{table}

\subsection{Impact of the BPE Image Tokenizer}

\textbf{Importance of Two-Stage Training:} Our results clearly demonstrate the necessity of the two-stage training process. Models trained only with SFT show notably lower performance compared to those that underwent both PT and SFT. This finding underscores the importance of the pretraining stage in guiding the expanded token embedding layers to learn meaningful visual representations.

\textbf{Freezing \vs Full Pretraining:} We observe a consistent performance advantage when freezing the text token embeddings during pretraining, \ie, PT(freeze), compared to training all embeddings, \ie, PT(full). For instance, in the `LLM+VQ+BPE' setting, PT(freeze)+SFT outperforms PT(full)+SFT across all benchmarks, with notable improvements in MME$^p$ (1223.5 \vs 1144.6) and MME$^c$ (307.1 \vs 284.3). This suggests that allowing the text embeddings to update during pretraining may interfere with the model's ability to focus on learning visual representations.

\textbf{Performance Gains:} The integration of our BPE Image Tokenizer improves model performance across all benchmarks, as evidenced by comparing the `LLM+VQ' and `LLM+VQ+BPE' rows in Table \ref{tab:eval}. These gains indicate that the BPE Image Tokenizer provides our \texttt{Being-VL-0} model with better multimodal understanding. The improvements is consistent across different training methods, also highlighting the robustness of our approach. 

\subsection{Impact of Additional Data Scaling}

To investigate the scalability of our approach, we incrementally add more datasets to both the pretraining and SFT phases. The results, shown in the lower part of Table \ref{tab:eval}, reveal a clear trend of performance improvement with increased data volume.

\textbf{Pretraining Data Scaling:} Adding RefCOCO (50.6K) and AOKVQA (66.2K) to the pretraining phase leads to consistent improvements across all benchmarks. For instance, VQAv2 scores increase from 57.1 to 59.6, and MMBench scores rise from 40.9 to 43.1. This suggests that our model benefits from exposure to a wider range of visual concepts and question-answering patterns during pretraining.

\textbf{SFT Data Scaling:} Further improvements are observed when adding ShareGPT4o (57.3K) and ALLaVA Inst (70K) to the SFT phase. The consistent improvement across different benchmarks indicates that our approach effectively leverages additional data to enhance multimodal understanding.

\textbf{Scalability Potential:} The continuous performance improvements with data scaling suggest that our training paradigm has not yet reached its upper limit. This finding is encouraging, as it implies that further performance gains could be achieved with larger datasets or more diverse data sources.

\subsection{Impact of BPE Vocabulary and Token Usage Patterns}

We conduct a detailed analysis of how the vocabulary of the BPE Image Tokenizer affects model performance and token usage patterns. Figure \ref{fig:vocab-size-ablation} illustrates the relationship between vocabulary size and normalized performance scores across different benchmarks. We observed that when the vocabulary size is lower than 8K (which equals the codebook size of the VQ-GAN model), the performance improves with the increase in vocabulary size. However, when the vocabulary size reaches 16K, all models on datasets except POPE show performance decline. This trend suggests an optimal vocabulary size should balances efficiency and model complexity.

To further verify this finding and gain deeper insights into how models utilize different vocabularies, we visualize the token embedding weights of models trained with different vocabulary sizes. Figure \ref{fig:weights-vis} shows the absolute values of these weights, averaged across all layers. We observe three distinct patterns: 1) With a smaller extended vocabulary (4K), the model tends to utilize more extended tokens from the BPE image tokenizer; 2) While with a larger extended vocabulary (16K), the model uses more of the original token IDs covered by the VQ-GAN's codebook; 3) Only when using a balanced extended vocabulary size (8k) does the model's token selection become more uniform, achieving the best performance. 

We hypothesize that this phenomenon occurs because the original token IDs covered by the VQ-GAN codebook contain fine-grained visual information, while the combined token IDs provided by the BPE Image Tokenizer capture higher-level semantic concepts. A balanced utilization of these two types of information appears to be most conducive to the model's overall learning and performance. These findings may provide valuable insights for future MLLM training design.

\begin{figure}[t]
\centering
\subfigure[The performance trend on various datasets as the BPE vocabulary size changes. To intuitively compare the trends across different datasets, we normalize the scores on each dataset to the score obtained when using a vocabulary size of 1K. The results show that for most datasets, performance reaches its optimum when the vocabulary size is 8K.]{
\label{fig:vocab-size-ablation}
{\includegraphics[width=.39\linewidth]{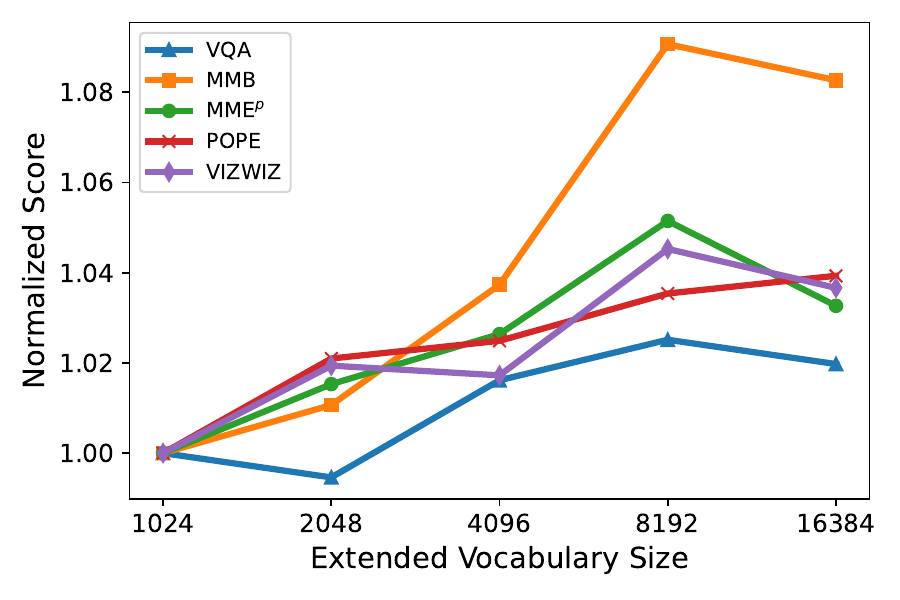}}
}
\hfill
\subfigure[Visualization of token embedding weights with different vocabulary sizes. From top to bottom, they respectively represent the 4K/8K/16K versions. The vertical axis represents the mean absolute value of weights across all layers. The horizontal axis represents the range of corresponding image token IDs. The left side of the red line represents the range of VQ-GAN's codebook (8K). The right area represents the range of correspondingly expanded BPE vocabulary (4K/8K/16K).]{
\label{fig:weights-vis}
{\includegraphics[width=.55\linewidth]{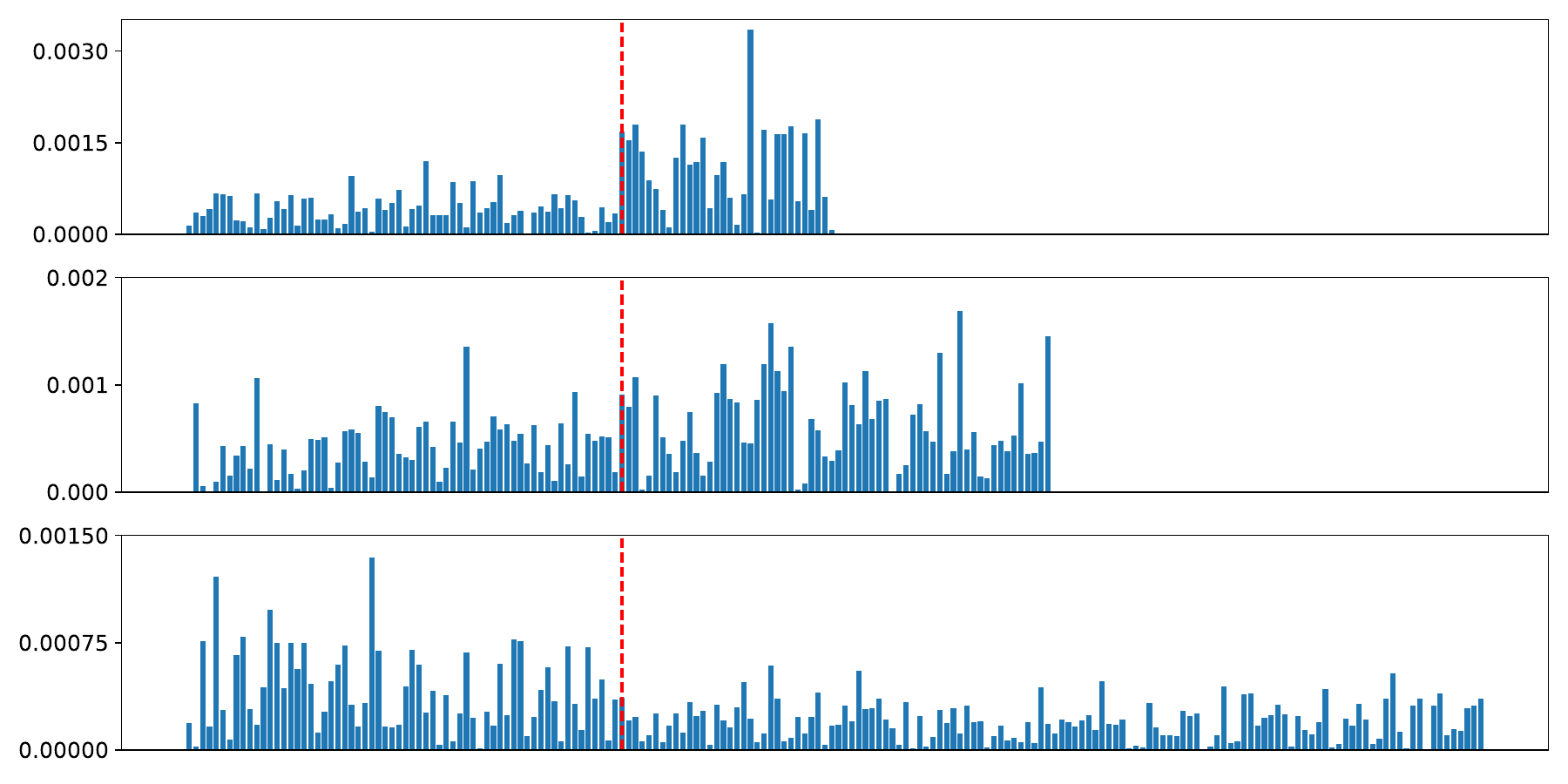}}
}
\caption{(Left) The relationship between model performance and the size of the BPE vocabulary. (Right) The visualization of model weights for tokens usage under different vocabularies.}
\label{fig:vocab-size-analysis}
\end{figure}

% We reiterate that this work primarily aims to analyze whether the designed image tokenizer can provide benefits as a proof of concept. Within the scope of this paper, we do not aim to conduct a precise and direct comparison of the image understanding capabilities of models trained using different paradigms through scaling up data volume. Our experimental results also demonstrate that even with a significantly limited amount of data compared to that used in pretraining the CLIP encoder, we can still endow text-based LLMs with good image understanding capabilities.

\section{Related Work}

Recent advancements in Multimodal Large Language Models (MLLMs) have demonstrated remarkable capabilities in various tasks, including visual question answering, image captioning, and cross-modal retrieval \citep{yin2023survey}. The evolution of MLLMs can be broadly categorized into two main approaches: late-fusion models with specialized encoders and early-fusion token-based models.

Traditional late-fusion MLLMs typically employ distinct specialized designs for different modalities \citep{team2024chameleon}. In this approach, modality-specific modules are trained separately using specialized data, followed by additional alignment steps to achieve late-fusion of different modalities. For instance, in the image modality, CLIP-based visual encoders \citep{radford2021learning, fang2023eva} are first pre-trained on extensive image-caption datasets, then aligned with text-only LLM backbones \citep{touvron2023llama, peng2023instruction} using additional data. Notable examples of this approach include Flamingo \citep{alayrac2022flamingo}, LLaVA \citep{liu2024visual, liu2024improved}, IDEFICS \citep{laurenccon2024obelics}, and Emu \citep{sun2023generative}. However, these models often face challenges in modality alignment, leading to issues such as hallucination in MLLMs \citep{bai2024hallucination}.

To address these challenges, recent research has explored early-fusion approaches through unified representations, proposing token-based methods for multimodal learning \citep{team2024chameleon, lu2024unified, zheng2024unicode}. These methods typically utilize Vector Quantization (VQ) models \citep{van2017neural, razavi2019generating, esser2021taming} to convert images into discrete tokens. The concept of token-based multimodal learning was initially explored in studies such as BEiT \citep{bao2021beit}, which introduced a self-supervised method for acquiring visual representations based on tokenized image patches. This idea was further developed in works like Cm3 \citep{aghajanyan2022cm3} and CM3Leon \citep{yu2023scaling}, which enabled joint reasoning across modalities and scaled up to autoregressive text-to-image generation. More recent models like Gemini \citep{team2023gemini}, Unicode \citep{zheng2024unicode}, and Chameleon \citep{team2024chameleon} have adopted end-to-end token-based approaches for multimodal learning.

While these token-based MLLMs demonstrate enhanced reasoning and generation capabilities across various modalities without requiring modality-specific components, they still face challenges in representation learning and alignment \citep{baltruvsaitis2018multimodal}. Our proposed BPE Image Tokenizer paradigm addresses these challenges by adhering more closely to the learning method of text-only LLMs. Unlike current token-based approaches, our method directly incorporates crucial structural prior information into the tokens through explicit merging. This approach enables Transformer models to learn input data more effectively, as supported by both our theoretical analysis and experimental results.

The key distinction of our work lies in its focus on optimizing the tokenization process itself, rather than relying solely on pre-trained visual encoders or simple quantization. While traditional visual encoders can achieve high compression rates, their encoded embeddings often depend on specialized decoders for interpretation. In contrast, our BPE Image Tokenizer creates tokens that are directly meaningful to the Transformer model, facilitating more effective learning of visual information without the need for specialized decoders. This approach bridges the gap between visual and textual modalities more seamlessly, potentially leading to more robust and versatile MLLMs.

\section{Conclusions, Limitations and Future Work}

In this paper, we proposed a novel paradigm for MLLM training, demonstrating improvements in image understanding capabilities across various benchmarks, even with limited training data. Our approach centers on the novel BPE Image Tokenizer, which effectively combines image token IDs to facilitate better integration of visual information into MLLMs. We provided theoretical insights into the importance of tokenization for learning 2D sequence data with Transformer models, supporting our empirical findings and offering a new perspective on processing visual information in language models. Our traind \texttt{Being-VL-0} model not only showcased the effectiveness of the BPE Image Tokenizer but also demonstrated the scalability of our approach.

While our results are promising, we acknowledge several limitations of our current work. The experiments were conducted with limited computational resources and training data compared to state-of-the-art MLLMs, potentially understating the full potential of our approach. Additionally, the simplified 2D Markov process used in our theoretical analysis, while instructive, may not fully capture the complexities of real-world image data.

Based on the findings and limitations, we propose several directions for future work. Scaling up the training data and model size would allow us to fully explore the potential of our BPE Image Tokenizer approach in large-scale MLLMs. Investigating the applicability of our method to other visual tasks, including video understanding, could significantly broaden its impact. Exploring more sophisticated tokenization strategies that can better capture the complex dependencies in real-world multimodal data is another promising avenue.

\newpage

\subsubsection*{Acknowledgments}

This work was supported by NSFC under Grant 62450001 and 62476008. The authors would like
to thank the anonymous reviewers for their valuable comments and advice.

\bibliography{main}
\bibliographystyle{iclr2025_conference}

\newpage

\appendix

\section{Theoretical Discussions}

\subsection{Information Loss of the tokenizer}\label{sec:info-loss}

Let $\XX$ denote the original image, $\VV$ denote the VQ-tokenized sequence, and $\TT$ denote the final BPE-tokenized sequence. The information loss introduced by the BPE tokenization process can be quantified using conditional entropy:
\begin{equation}
    L_{bpe} = H(\XX|\TT) - H(\XX|\VV).
\end{equation}

This quantity represents the increase in uncertainty about the original image $\XX$ when using BPE tokens $\TT$ compared to using VQ tokens $\VV$. Since
\begin{equation}\label{eq:loss_eq}
    L_{bpe} = H(\XX|\TT) - H(\XX|\VV) = [H(\XX,\TT) - H(\TT)] - [H(\XX,\VV) - H(\VV)],
\end{equation}
and recall that mutual information can be expressed as:

\begin{equation}
\begin{aligned}
I(\XX;\VV) &= H(\XX) + H(\VV) - H(\XX,\VV),\\
I(\XX;\TT) &= H(\XX) + H(\TT) - H(\XX,\TT).
\end{aligned}
\end{equation}

Substituting back into \eqref{eq:loss_eq} and we get

\begin{equation}
\begin{aligned}
L_{bpe} &= [H(\XX) + H(\TT) - I(\XX;\TT) - H(\TT)] - [H(\XX) + H(\VV) - I(\XX;\VV) - H(\VV)]\\
     &= [H(\XX) - I(\XX;\TT)] - [H(\XX) - I(\XX;\VV)]\\
     &= I(\XX;\VV) - I(\XX;\TT).
\end{aligned}
\end{equation}

Therefore, the information loss can be expressed as the reduction in mutual information between the original image and its tokenized representation. This formulation is particularly useful as it directly quantifies how much information about the original image is preserved through the tokenization process.

Next, we need to analyze how each BPE merge operation affects the mutual information $I(\XX;\TT)$. For any given BPE token sequence $\TT$, we can decompose $H(\XX|\TT)$ based on individual tokens:

\begin{equation}
H(\XX|\TT) = \sum_t P(t)H(\XX|\TT=t).
\end{equation}

During a merge operation that combines tokens $(t_i, t_j)$ into $t_m$, the change in conditional entropy can be expressed as:

\begin{equation}
\Delta H = P(t_m)H(\XX|\TT=t_m) - [P(t_i,t_j)H(\XX|\TT=t_i,t_j)].
\end{equation}

% Using the information processing inequality, we have

% \begin{equation}
%     I(X;t_i,t_j) \geq I(X;t_m)
% \end{equation}

% This can be rewritten in terms of conditional entropy:

% \begin{equation}
%     H(X) - H(X|t_i,t_j) \geq H(X) - H(X|t_m)
% \end{equation}

% Therefore,

% \begin{equation}
%     H(X|t_m) \geq H(X|t_i,t_j)
% \end{equation}

For the whole token distribution, the increase in conditional entropy can be quantified using the KL divergence:

\begin{equation}
    H(\XX|t_m) = H(\XX|t_i,t_j) + \mathrm{KL}(P(\XX|t_i,t_j)||P(\XX|t_m)),
\end{equation}
where $\mathrm{KL}(\cdot)$ represents the KL divergence. In this equation, 

\begin{itemize}
    \item $H(\XX|t_i,t_j)$ represents the original uncertainty about $\XX$ given the separate tokens
    \item $\mathrm{KL}(P(\XX|t_i,t_j)||P(\XX|t_m))$ represents the extra uncertainty introduced by merging the tokens
    \item The equation shows that no other sources of information loss exist beyond these two terms.
\end{itemize}

This leads to an initial upper bound on the total information loss:

\begin{equation}
    L_{bpe} \leq \sum_{merges} \mathrm{KL}(P(\XX|t_i,t_j)||P(\XX|t_m)).
\end{equation}
The summation here represents the sum over all merges during the BPE process. Considering that BPE performs merges based on frequency patterns and with minimum merge frequency threshold $p_{\min}$, we can relax the inequality to get a more interpretable bound:

\begin{equation}
    L_{bpe} \leq (|D_{bpe}| - |D_{vq}|) \times (-p_{\min}\log(p_{\min})).
\end{equation}

Here, $|D_{vq}|$ is the size of the VQ codebook, and $|D_{bpe}|$ is the size of the vocabulary after BPE extension.

To put this bound in perspective, consider a typical configuration:

\begin{itemize}
    \item $|D_{vq}|=8192$ (VQ codebook size)
    \item $|D_{bpe}|=8192+8192$  (vocabulary size after BPE extension)
    \item $p_{\min} = 0.01$ (minimum merge frequency)
\end{itemize}

The upper bound on information loss for the whole vocabulary would be:

\begin{equation}
L_{bpe} \leq (8192+8192-8192) \times (-0.01\times \log(0.01)) \approx 377.3 ~ \mathrm{bits}.
\end{equation}

For a single image, the original VQ tokens ($32\times 32$ patches) contain $32 \times 32 \times \log_2 8192 = 13312 ~ \mathrm{bits}$ information, and the per token loss of the extended BPE vocabulary is $L_{bpe}/(|D_{bpe}| - |D_{vq}|) \approx 0.046 ~ \mathrm{bits}$. We can calculate that the max loss ratio of the single image is only $32\times 32\times 0.046 / 13312\approx 0.35\%$, which is a relatively small information loss. Considering the benefits brought by BPE tokenization as discussed earlier, we believe this loss is acceptable.

\subsection{Full Proofs}\label{sec:full-proof}

% \subsection{Proof of Proposition \ref{prop:1}}
% \label{sec:proof-prop-1}

\begin{proof}[Proof of Proposition \ref{prop:1}]
    We begin by proving \eqref{eq:unigramlower}. For any unigram model $Q \in \QQQuni$ on the tokens, there exist distributions $Q_{\#}$ and $Q_{\tok}$, supported on $\NNNN$ and $\dict$ (dictionary of the tokenizer) respectively, such that for all token sequence $\tt = (t_1, ..., t_{|\tt|})$,
    $$Q(\tt) = Q_{\#}(|\tt|)\prod_{r=1}^{|\tt|} Q_{\tok}(t_r).$$
    Consequently, for every $m\in\NNNN$ we have
    \[\begin{aligned}
        \LLL_m(Q) 
        &= -\frac{1}{m^2}\EE\left[\log Q_{\#}(|\flatt(\XX)|)\right] - \frac{1}{m^2}\EE\left[\sum_{t \in\flatt(\XX)} \log Q_{\tok}(t)\right]\\
        &\stackrel{(i)}{=}  -\frac{1}{m^2}\EE\left[\log Q_{\#}(m^2)\right] - \frac{1}{m^2}\sum_{i=1}^{m}\sum_{j=1}^{m}\EE\left[ \log Q_{\tok}(X_{ij})\right]\\
        &\geq - \frac{1}{m^2}\sum_{i=1}^{m}\sum_{j=1}^{m}\EE\left[ \log Q_{\tok}(X_{ij})\right] \\
        & \stackrel{(ii)}{=} -\sum_{a\in\CCC} \pi(a) \log(Q_{\tok}(a)) \\
        & \geq H(\pi),
    \end{aligned}\]
    where in $(i)$ we apply the definition of $\flatt(\cdot)$, and in $(ii)$ we use the fact that $\pi$ is the stationary distribution for each column/row as assumed in Scenario \ref{case:column-wise}/Scenario \ref{case:row-wise}. Thus, \eqref{eq:unigramlower} is proved.

    Next, we proceed to prove \eqref{eq:optimallower}. For an arbitrary model $Q$, we have
    \[\begin{aligned}
        m^2\LLL_m(Q) &= -\EE[\log(Q(\flatt(\XX)))] \\
        &= \EE[\log(P^*(\flatt(\XX))/Q(\flatt(\XX)))] - \EE[\log(P_m^*(\flatt(\XX))]\\
        &= D_{\operatorname{KL}}(P_m^* || Q) + H(P_m^*),\\
        &\geq H(P_m^*).
    \end{aligned}\]
    where $P_m^*$ denotes the actual joint distribution of $\flatt(\XX)$ and $D_{\operatorname{KL}}(P_m^* || Q)$ denotes the KL divergence between $P_m^*$ and $Q$. Under Scenario \ref{case:column-wise},  leveraging the independence between columns we derive
    \[\begin{aligned}
        & H(P_m^*) \\
        &= -\sum_{j=1}^m \left\{\EE[\log{\pi(X_{1j})}] + \sum_{i=2}^m \EE[\log{P(X_{ij}\mid X_{i-1, j})}]\right\} \\
        &= -\sum_{j=1}^m \left\{\sum_{a\in\CCC} \pi(a)\log(\pi(a)) + (m-1) \sum_{a\in\CCC} \sum_{a^\prime\in\CCC} \pi(a) P(a^\prime\mid a)\log\left(P(a^\prime\mid a)\right)\right\}\\
        &= mH(\pi) + m(m-1)H_{\infty}.
    \end{aligned}\]
    This result is also applicable under Scenario \ref{case:row-wise} by switching the roles of $i$ and $j$. 
    As a result, in both scenarios we have
    \[\liminf_{m\to\infty} \min_{Q} \LLL_m(Q) \geq \lim_{m\to\infty} \frac{1}{m^2}H(P_m^*) = H_{\infty}.\]
    Conversely, it is evident that 
    \[\min_{Q} \LLL_m(Q) \leq \LLL_m(P_m^*) = \frac{1}{m^2}H(P_m^*).\]
    Hence, \[\limsup_{m\to\infty} \min_{Q} \LLL_m(Q) \leq \lim_{m\to\infty}\LLL_m(P_m^*) = H_{\infty}.\]
    This finishes the proof of \eqref{eq:optimallower}.
\end{proof}

% \subsection{Proof of Proposition \ref{prop:tokenlower}}
% \label{sec:proof-prop-tokenlower}

\begin{proof}[Proof of Proposition \ref{prop:tokenlower}]

In the one-dimensional case, where the data is a string generated from an ergodic Markov process, authors in \citep{rajaraman2024toward} have proved the following result:

\begin{lemma}[Theorem 4.1 in \citep{rajaraman2024toward}\label{lem:main}]

Suppose a string $\ss$ of length $m$ is generated from an ergodic data source supported on a finite set $\AAA$, with a transition kernel $P(\cdot\mid\cdot)$ and a stationary distribution $\pi$. 
Assume the transition kernel satisfies $\min_{a, a^\prime \in \AAA} P(a^\prime \mid a) \stackrel{\Delta}{=} \delta > 0$.
Then, there exists a tokenizer with a dictionary containing at most 
$D$ tokens, along with an encoding function $\enc(\cdot)$ applied to $\ss$, such that
\begin{equation}\label{eq:sequencetoken}
    \begin{aligned}
        \limsup_{m\to\infty} \min_{Q\in\QQQuni} &\frac{1}{m}\EE[\log(1/Q(\enc(\ss)))]) \\
    &\leq \frac{1}{1-\varepsilon} \sum_{a\in\AAA} \sum_{a^\prime\in\AAA} \pi(a) P(a^\prime\mid a)\log\left(1/P(a^\prime\mid a)\right), 
    \end{aligned}
\end{equation}
where $\varepsilon = \log(1/\delta)/(0.99\log(D))$ and $D \in \NNNN$ is an arbitrary constant that is sufficiently large.
\end{lemma}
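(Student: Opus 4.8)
The plan is to prove the lemma by exhibiting an explicit \emph{probability-threshold} tokenizer and analyzing the token sequence it induces on the string $\ss$. Writing $-\log P(S_i\mid S_{i-1})$ for the one-step surprise along $\ss$, note that under the stationary law each such term has mean $H_\infty:=\sum_{a,a'\in\AAA}\pi(a)P(a'\mid a)\log(1/P(a'\mid a))$. The encoder $\enc$ scans $\ss$ left to right and closes off a token as soon as the running product $\emp(S_{1:k}):=\pi(S_1)\prod_{i=2}^{k}P(S_i\mid S_{i-1})$ first falls below a threshold $\tau$. Through the kernel $P$, the emitted tokens $(T_1,T_2,\dots)$ form an ergodic Markov chain on the finite token alphabet, so their empirical frequencies converge to a stationary law $\mu$. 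The structural fact I would first establish is that the optimal unigram model just matches these frequencies, giving
$$\limsup_{m\to\infty}\min_{Q\in\QQQuni}\tfrac1m\,\EE[\log(1/Q(\enc(\ss)))]=\frac{H(\mu)}{\bar\ell},$$
where $\bar\ell=\EE_\mu[|T|]$ is the mean token length and the length factor $Q_\#$ contributes $o(1)$ per symbol because the token count concentrates at $m/\bar\ell$. It then suffices to bound numerator and denominator separately.

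For the dictionary I would take $\tau=1/(\delta D)$. Since $\min_{a,a'}P(a'\mid a)=\delta$, crossing the threshold overshoots by a factor at most $\delta$, so every emitted token $t$ obeys $\emp(t)\in[1/D,\,1/(\delta D))$, i.e.\ $-\log\emp(t)\in(\log D-\log(1/\delta),\,\log D]$. For each fixed starting symbol the maximal tokens partition the continuations, whence $\sum_t\emp(t)=\sum_{a\in\AAA}\pi(a)=1$; combined with $\emp(t)\ge1/D$ this caps the number of distinct tokens at $D$, so the dictionary is admissible. The only properties I carry forward are this support bound and the per-token surprise of order $\log D$.

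For the numerator, entropy is at most the log of the support size, so $H(\mu)\le\log D$ immediately. For the denominator, $|T|$ is the first-passage time $L=\min\{k:\sum_{i\le k}(-\log P(S_i\mid S_{i-1}))>\log(1/\tau)\}$ of the surprise functional; by a Wald/renewal identity for Markov-additive functionals the expected accumulated surprise equals $H_\infty\bar\ell$ up to a bounded correction, and since it exceeds $\log(1/\tau)=\log D-\log(1/\delta)$ this forces $\bar\ell\ge(\log D-\log(1/\delta))/H_\infty-O(1)$. To leading order the two bounds combine to
$$\frac{H(\mu)}{\bar\ell}\le H_\infty\,\frac{\log D}{\log D-\log(1/\delta)}=\frac{H_\infty}{1-\log(1/\delta)/\log D}.$$
Because $\log(1/\delta)/\log D\le\varepsilon=\log(1/\delta)/(0.99\log D)$, the right-hand side is at most $H_\infty/(1-\varepsilon)$; the factor $0.99$ rather than $1$ is exactly the slack that absorbs the bounded overshoot and the $O(1)$ corrections once $D$ is large.

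The main obstacle is the denominator analysis, that is, turning the informal statement ``each symbol contributes $H_\infty$ of surprise'' into a bound on $\bar\ell$ for the random stopping time $L$. Because $L$ is defined by the cumulative surprise crossing a level, it is heavily correlated with the increments, so one cannot simply multiply $\bar\ell$ by $H_\infty$; the right tool is an optional-stopping identity applied to the centered functional $-\log P(S_i\mid S_{i-1})-h(S_{i-1})$, where $h(a)=\sum_{a'}P(a'\mid a)\log(1/P(a'\mid a))$ satisfies $\EE_\pi h=H_\infty$, together with a bound on the solution of the associated Poisson equation; both the overshoot and this correction are controlled by $\delta$. A secondary point is justifying the reduction to $H(\mu)/\bar\ell$: one must check ergodicity of the induced token chain and verify that the single truncated token at the end of the length-$m$ string, along with the $Q_\#$ factor, contributes only $o(1)$ per symbol as $m\to\infty$.
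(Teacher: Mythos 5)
The paper itself never proves Lemma \ref{lem:main}: it is imported verbatim as Theorem 4.1 of \citep{rajaraman2024toward}, whose proof analyzes an LZW dictionary learned from a sampled training string. Your Tunstall-style threshold construction --- close a token when $\emp(S_{1:k})=\pi(S_1)\prod_{i=2}^k P(S_i\mid S_{i-1})$ first drops below $\tau=1/(\delta D)$ --- is a genuinely different route, and for a pure existence statement it is admissible; your dictionary count is also correct (for each starting symbol the tokens are the leaves of a complete stopping tree, so $\sum_t \emp(t)=1$, and $\emp(t)\geq 1/D$ forces at most $D$ tokens), as is the localization $-\log\emp(t)\in(\log D-\log(1/\delta),\,\log D]$.

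The gap is in the final quantitative step, and it is not cosmetic. You bound the numerator by $H(\mu)\leq\log D$ and the denominator by $\bar\ell\geq(\log D-\log(1/\delta)-c)/H_{\infty}$, where $c$ collects the overshoot and the Wald/Poisson correction. Clearing denominators, $H(\mu)/\bar\ell\leq H_{\infty}/(1-\varepsilon)$ with $\varepsilon=\log(1/\delta)/(0.99\log D)$ requires
\begin{equation*}
\varepsilon\log D\;\geq\;\log(1/\delta)+c,
\qquad\text{i.e.}\qquad
c\;\leq\;\Bigl(\tfrac{1}{0.99}-1\Bigr)\log(1/\delta)\;\approx\;0.01\log(1/\delta).
\end{equation*}
So your closing claim that the $0.99$ ``absorbs the bounded overshoot and the $O(1)$ corrections once $D$ is large'' is a miscalculation: the slack the $0.99$ buys in the per-symbol loss is $\Theta(\log(1/\delta)/\log D)$, the \emph{same} order in $D$ as the corrections contribute, so no choice of large $D$ helps; you would need the source-dependent constant $c$ to be at most one percent of $\log(1/\delta)$, which is neither established nor true in general. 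Worse, your threshold charges $\log(1/\pi(t_1))$ at each restart while the chain pays $\log(1/P(t_1\mid \text{previous last symbol}))$; this mismatch alone can inject a full extra $\log(1/\delta)$ per token, after which the needed inequality $\log(1/\delta)/0.99\geq 2\log(1/\delta)+c$ fails for \emph{every} $D$. The repair is to drop the $H(\mu)/\bar\ell$ decomposition and Wald entirely: take $Q_{\tok}=\emp$ itself (your own computation shows it is a probability distribution on the dictionary), use the pathwise bounds $\log(1/\emp(t))\leq\log D$ and $\log(1/\emp(t))>\log D-\log(1/\delta)$, and compare $\sum_i\log(1/\emp(T_i))$ with $-\log P(S_{1:m})$, whose difference is at most $N\log(1/\delta)$ for $N$ tokens. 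This yields a per-symbol loss of at most $H_{\infty}(\log D-\log(1/\delta))/(\log D-2\log(1/\delta))+o_m(1)$, which is $\leq H_{\infty}/(1-\varepsilon)$ precisely when $\log(1/\delta)\leq 0.01\log D$, i.e.\ $D\geq(1/\delta)^{100}$ --- a multiplicative comparison in which ``$D$ sufficiently large'' genuinely closes the argument and the $0.99$ does its intended job. Your remaining checks (ergodicity of the token chain, the truncated final token, the $Q_{\#}$ factor) are real but routine; for instance, keep the single-symbol tokens in the dictionary and encode the tail symbol by symbol, at total cost $O(\log D\cdot\log(1/\delta))=o(m)$.
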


Building on the results of Lemma \ref{lem:main}, we extend these findings to images generated under Scenario \ref{case:column-wise} or \ref{case:row-wise}.

    In both scenarios, the generated image is composed of $m$ parallel Markov sequences, each of length $m$, sharing the same transition kernel $P(\cdot \mid \cdot)$ and stationary distribution $\pi$. 

    Consider Scenario \ref{case:column-wise}. According to Lemma \ref{lem:main} and \eqref{eq:sequencetoken}, there exists a tokenizer equipped with a dictionary, denoted as $\dict$ where $|\dict| \leq D$, and an encoding function, denoted as $\enc_{\col}(\cdot)$, and there also exists a unigram model $Q_{m} \in \QQQuni$ for each $m\in\NNNN$, such that
    \[\begin{aligned}
        &\limsup_{m\to\infty} \frac{1}{m}\EE[\log(1/Q_m(\enc_{\col}(\XX_{:j})))]) \\
        &\leq \frac{1}{1-\varepsilon} \sum_{a\in\CCC} \sum_{a^\prime\in\CCC} \pi(a) P(a^\prime\mid a)\log\left(1/P(a^\prime\mid a)\right)\\
        &=\frac{1}{1-\varepsilon}H_{\infty}.
    \end{aligned}\]
    Here $\XX_{:j}$ represents the $j$-th column of image $\XX$ and $j \in \{1, ..., m\}$ is arbitrary.
    For the unigram model $Q_m$, there exist distributions $Q_{m}^{\#}$ and $Q_{m}^{\tok}$, supported on $\NNNN$ and $\dict$ respectively, such that for all token sequence $\tt = (t_1, ..., t_{|\tt|})$,
    $$Q_m(\tt) = Q_{m}^{\#}(|\tt|)\prod_{r=1}^{|\tt|} Q_{m}^{\tok}(t_r).$$
    Consequently, it is straightforward to derive:
    \begin{equation}\label{eq:tok}
        \begin{aligned}
        \limsup_{m\to\infty} \frac{1}{m}\EE\left[\sum_{t\in \enc_{\col}(\XX_{:j})}\log(1/Q_m^{\tok}(t))\right] 
        \leq \frac{1}{1-\varepsilon}H_{\infty}.
    \end{aligned}
    \end{equation}
    
    To construct a qualified tokenizer for image $\XX$, we can simply use the dictionary $\dict$, and define the encoding function as
    \[\enc(\XX) = (\enc_{\col}(\XX_{:1}), ..., \enc_{\col}(\XX_{:m})),\]
    which concatenates the tokens returned by each column. Subsequently, let
    $\tilde{Q}_m \in \QQQuni$ be the unigram model defined by
    \[\tilde{Q}_m(\tt) = Q_{m}^{\operatorname{unif}}(|\tt|)\prod_{r=1}^{|\tt|} Q_{m}^{\tok}(t_r).\]
    Here $Q_{m}^{\operatorname{unif}}$ is the uniform distribution over $\{1, 2, ..., m^2\}$, and $Q_{m}^{\tok}$ is as previously defined.
    It then follows that:
    \[\begin{aligned}
        &\LLL_m(\tilde{Q}_m \circ  \enc) \\ 
        &= \frac{1}{m^2}\EE[\log(1/Q_{m}^{\operatorname{unif}}(|\enc(\XX)|)) + \frac{1}{m^2}\EE\left[ \sum_{j=1}^m\sum_{t\in \enc_{\col}(\XX_{:j})}\log(1/Q_m^{\tok}(t))\right] \\
        &= \frac{2\log(m)}{m^2} + \frac{1}{m}\EE\left[\sum_{t\in \enc_{\col}(\XX_{:j})}\log(1/Q_m^{\tok}(t))\right]
    \end{aligned}\]
    Finally, by \eqref{eq:tok}
    \[\limsup_{m\to\infty} \min_{Q\in\QQQuni}\LLL_m(Q \circ  \enc) \leq \limsup_{m\to\infty}  \LLL_m(\tilde{Q}_m \circ  \enc) \leq \frac{1}{1-\varepsilon} H_{\infty},\]
    thereby proving \eqref{eq:imagetoken}. 

    The proof of Scenario \ref{case:row-wise} is quite similar and is omitted here.
\end{proof}

\newpage

\section{Implementation Details}\label{sec:implementation}

\subsection{Additional Pseudo Codes}

The additional functions used in Algorithm \ref{algo:2D-BPE} are shown in Algorithm \ref{algo:update-matrix} and \ref{algo:freq-pair}.

% UpdateMatrix Procedure
\begin{algorithm}
\caption{Update Adjacency Matrix\label{algo:update-matrix}}
\begin{algorithmic}[1]
\Procedure{UpdateMatrix}{$D$}
    \State $A \gets \text{zeros}(v, v)$
    \For{each $d \in D$}
        \If{$d$ is 1D}
            \For{$j \gets 1$ to $|d| - 1$}
                \State $A[d_j, d_{j+1}] \gets A[d_j, d_{j+1}] + 1$
            \EndFor
        \Else
            \For{each $(x, y)$ in $d$}
                \State Update $A$ for neighbors of $(x, y)$
            \EndFor
        \EndIf
    \EndFor
    \State \Return $A$
\EndProcedure
\end{algorithmic}
\end{algorithm}

% MaxFreqPair Procedure
\begin{algorithm}
\caption{Find Most Frequent Pair\label{algo:freq-pair}}
\begin{algorithmic}[1]
\Procedure{MaxFreqPair}{$A$}
    \State $U \gets \text{UpperTri}(A)$
    \If{$\sum U = 0$}
        \State \Return $(null, 0)$
    \EndIf
    \State $(i, j) \gets \arg\max U$
    \State \Return $((i, j), U_{ij})$
\EndProcedure
\end{algorithmic}
\end{algorithm}

% % Merge Procedure
% \begin{algorithm}
% \caption{Apply Merge Operation}
% \begin{algorithmic}[1]
% \Procedure{Merge}{$D, p, t$}
%     \State $D' \gets \emptyset$
%     \For{each $d \in D$}
%         \State $d' \gets \text{Replace consecutive } p \text{ with } t \text{ in } d$
%         \State $D' \gets D' \cup \{d'\}$
%     \EndFor
%     \State \Return $D'$
% \EndProcedure
% \end{algorithmic}
% \end{algorithm}

\subsection{Hyperparameters for Figure \ref{fig:wo-tok} and \ref{fig:w-tok}}

In the experiment shown in Figure \ref{fig:wo-tok}, we experimented all parameter combinations listed in Table \ref{tab:hyperparameters-tr} for the Transformer model. For the experiment in Figure \ref{fig:w-tok} that used a BPE tokenizer, except for the layers shown, all other parameters used the minimum values from Table \ref{tab:hyperparameters-tr}.

\begin{table}[ht]
\centering
\caption{Hyperparameters for Transformer models}
\label{tab:hyperparameters-tr}
\begin{tabular}{c|c}
\toprule[1pt]
\textbf{Hyperparameter} & \textbf{Value / Range} \\
\midrule
BPE dictionary size & 10 \\
batch size & \{8, 16, 32\} \\
gradient acc. steps & 1 \\
learning rate & 2e-3 \\
weight decay & 1e-3 \\
scheduler & cosine \\
optimizer & adamw \\
dropout & 0 \\
number of heads & \{1, 2, 4, 8, 16\} \\
number of layers & \{2, 4, 6, 8\} \\
embedding dimension & \{10, 20, 30, 40\} \\
\bottomrule[1pt]
\end{tabular}
\end{table}

\subsection{Hyperparameters for the VQ-GAN model}

The hyperparameters for the VQ-GAN model used in our experiments are shown in Table \ref{tab:hyperparameters-vqgan}.

\begin{table}[ht]
\centering
\caption{Hyperparameters for the VQ-GAN model}
\label{tab:hyperparameters-vqgan}
\begin{tabular}{c|c}
\toprule[1pt]
\textbf{Hyperparameter} & \textbf{Value} \\
\midrule
embedding dimension & 256 \\
codebook size & 8192 \\
z\_channels & 256 \\
resolution & 512 \\
dropout & 0 \\
\bottomrule[1pt]
\end{tabular}
\end{table}

\subsection{Hyperparameters for training MLLM}

The hyperparameters for pre-training / supervised-fine-tuning MLLM are shown in Table \ref{tab:hyperparameters-train}.

\begin{table}[ht]
\centering
\caption{Hyperparameters for training MLLM}
\label{tab:hyperparameters-train}
\begin{tabular}{c|c|c}
\toprule[1pt]
\textbf{Hyperparameter} & \textbf{PT} & \textbf{SFT} \\
\midrule
batch size & 1 & 1 \\
gradient accumulation & 2 & 4 \\
learning rate & 1e-3 & 3e-5 \\
learning schedule & cosine & cosine \\
warmup ratio & 0.03 & 0.03 \\
weight decay & 0 & 0 \\
epoch & 3 & 2 \\
optimizer & AdamW & AdamW \\
deepspeed stage & 2 & 2 \\
\bottomrule[1pt]
\end{tabular}
\end{table}

\subsection{Details of the Pipeline}

In this section, we provide more detailed descriptions of our pipelines as below.

\subsubsection{LLM+VQ+BPE Pipeline}

Our complete pipeline (LLM+VQ+BPE) consists of three main components:

\begin{itemize}
    \item Image Quantization: We employ a pretrained VQ-GAN model (with a codebook size of 8192) to quantize input images into a sequence of token IDs. This process converts images of any size into a 1024-dimensional ID tensor.
    \item BPE Image Tokenizer: This component, which distinguishes our approach from previous methods, processes the quantized token IDs using our trained BPE tokenizer. The tokenizer learns to merge frequently co-occurring tokens, creating a new vocabulary that captures higher-level visual patterns.
    \item Multimodal Learning: The processed image token IDs are combined with text token IDs and fed into the LLM (Llama-3.1-8B in our experiments). Before training, we expand the token embedding layers of the base model to accommodate both the VQ-GAN token IDs (8192) and the BPE vocabulary (e.g., 4096), along with special tokens for marking image positions.
\end{itemize}

\newpage

\subsubsection{LLM+VQ Baseline}

The LLM+VQ variant serves as our baseline, following a simpler approach:

\begin{itemize}
    \item It uses the same VQ-GAN model for image quantization as in LLM+VQ+BPE.
    \item The quantized token IDs are directly combined with text token IDs, bypassing the BPE tokenizer stage.
    \item The token embedding layers are only expanded to accommodate the VQ-GAN token IDs and special tokens.
\end{itemize}

\subsubsection{Training Process}
Both variants undergo the same two-stage training process:

\begin{itemize}
    \item Image Understanding Pretraining (PT): During this stage, we train the expanded token embeddings using image caption data. In the PT(freeze) setting, we freeze the original text token embeddings, while in PT(full), all embeddings are updated.
    \item Supervised Fine-Tuning (SFT): All parameters are unfrozen and fine-tuned using SFT data.
\end{itemize}

\section{Additional Results}

\subsection{Additional Training with Llama2 version / freezing during SFT}

To further investigate the effects of different base models and varying component freezing strategies, we conducted additional experiments. The results are shown in Table \ref{tab:eval-additional}

\begin{table}[h]
\centering
\caption{Full results with additional experiments. The red parts indicate using Llama2 as the base LLM for training. The blue parts indicate freezing the embedding corresponding to the visual token during the SFT stage.}
\label{tab:eval-additional}
\vspace{2mm}
\resizebox{\textwidth}{!}{
\begin{tabular}{c|l|cccccc}
\toprule[1pt]
& \textbf{Training type} & \textbf{VQAv2} & \textbf{MMBench} & \textbf{MME}$^p$ & \textbf{MME}$^c$ & \textbf{POPE} & \textbf{VizWiz} \\
\midrule
\multirow{3}{*}{Llama3.1+VQ} & SFT & 51.1 & 35.9 & 972.3 & 231.8 & 73.8 & 43.1 \\
 & PT(full)+SFT & 53.7 & 37.0 & 1037.2 & 261.4 & 75.3 & 44.2 \\
 & PT(freeze text)+SFT & 55.4 & 37.6 & 1054.5 & 277.0 & 76.0 & 45.3 \\
\midrule
 \color{red} Llama2+VQ & PT(freeze text)+SFT & \color{red} 54.0 & \color{red} 35.7 & \color{red} 991.9 & \color{red} 254.2 & \color{red} 75.1 & \color{red} 44.4 \\
\midrule
\multirow{5}{*}{Llama3.1+VQ+BPE} & SFT & 52.2 & 35.4 & 1029.7 & 269.6 & 76.3 & 45.3 \\
 & \color{blue} PT(full)+SFT(freeze visual) & \color{blue} 31.7 & \color{blue} 17.8 & \color{blue} 624.1 & \color{blue} 171.9 & \color{blue} 46.4 & \color{blue} 29.5 \\
 & PT(full)+SFT & 56.5 & 38.6 & 1144.6 & 284.3 & 77.3 & 45.8 \\
 & \color{blue} PT(freeze text)+SFT(freeze visual) & \color{blue} 22.5 & \color{blue} 13.3 & \color{blue} 488.7 & \color{blue} 143.6 & \color{blue} 35.2 & \color{blue} 21.5 \\
 & PT(freeze text)+SFT & 57.1 & 40.9 & 1223.5 & 307.1 & 79.0 & 46.0 \\
\midrule
 \color{red} Llama2+VQ+BPE & PT(freeze text)+SFT & \color{red} 56.5 & \color{red} 38.1 & \color{red} 1112.2 & \color{red} 277.8 & \color{red} 77.9 & \color{red} 44.9 \\
\midrule
\rowcolor{gray!20} & +RefCOCO(50.6K) & 58.6 & 42.3 & 1257.4 & 314.3 & 79.8 & 47.1 \\
\rowcolor{gray!20} \multirow{-2}{*}{Additional scaling (PT)} & +AOKVQA (66.2K) & 59.6 & 43.1 & 1288.1 & 321.4 & 80.4 & 47.5 \\
\midrule
\rowcolor{gray!50} & +ShareGPT4o (57.3K) & 60.2 & 43.7 & 1304.5 & 327.7 & 80.9 & 47.8 \\
\rowcolor{gray!50} \multirow{-2}{*}{Additional scaling (SFT)} & +ALLaVA Inst (70K) & 60.6 & 44.0 & 1316.2 & 331.0 & 81.3 & 48.2 \\
\bottomrule[1pt]
\end{tabular}
}
\end{table}

\newpage

\subsection{Additional Evaluation}

To assess our method's effectiveness across various capabilities, we performed comprehensive evaluations analyzing performance across distinct sub-categories. The results are shown in Table \ref{tab:eval-mme} and \ref{tab:eval-open}

\begin{table}[htbp]
\centering
\caption{Comparison of LLM+VQ+BPE and LLM+VQ performance across different categories in MME benchmark.}
\begin{tabular}{llcc}
\toprule[1pt]
 & \textbf{Category} & \textbf{LLM+VQ+BPE} & \textbf{LLM+VQ} \\
\midrule
\multirow{10}{*}{Perception} & Existence & \textbf{145.00} & 113.33 \\
 & Count & \textbf{120.00} & 110.00 \\
 & Position & \textbf{106.67} & 103.33 \\
 & Color & \textbf{148.33} & 120.00 \\
 & Posters & \textbf{136.24} & 121.08 \\
 & Celebrity & \textbf{111.76} & 89.51 \\
 & Scene & \textbf{125.00} & 101.75 \\
 & Landmark & \textbf{130.25} & 110.00 \\
 & Artwork & \textbf{112.75} & 90.50 \\
 & OCR & 87.50 & \textbf{95.00} \\
\midrule
\multirow{4}{*}{Cognition} & Commonsense Reasoning & \textbf{107.14} & 89.57 \\
 & Numerical Calculation & \textbf{62.50} & 50.00 \\
 & Text Translation & 95.00 & \textbf{102.50} \\
 & Code Reasoning & \textbf{42.50} & 35.00 \\
\bottomrule[1pt]
\end{tabular}

\label{tab:eval-mme}
\end{table}

\begin{table}[htbp]
\centering
\caption{Comparison of LLM+VQ+BPE and LLM+VQ performance across different categories in MLLM-bench. The numbers in the table represent the quantity of answers judged to be better for each respective model. ``Tie'' indicates the number of answers where there is no significant difference between the two models' answers.}
\begin{tabular}{lccc}
\toprule[1pt]
\textbf{Category} & \textbf{LLM+VQ+BPE} & \textbf{LLM+VQ} & \textbf{Tie} \\
\hline
perception & 26 & \textbf{34} & 10 \\
understanding & \textbf{52} & 33 & 25 \\
Applying & \textbf{27} & 14 & 19 \\
Analyzing & \textbf{49} & 40 & 31 \\
Evaluation & 12 & \textbf{19} & 9 \\
Creation & 7 & \textbf{9} & 4 \\
\midrule
Total & \textbf{173} & 149 & 98 \\
\bottomrule[1pt]
\end{tabular}
\label{tab:eval-open}
\end{table}

\begin{table}[htbp]
\centering
\caption{Evaluation results on the Nocaps and Flickr30k benchmarks. For Nocaps, we chose the validation split. For Flickr30k, we selected a 1k-image split. We compared the impact of adding BPE on the performance of image captioning tasks that require detailed understanding. Additionally, we investigated versions of LLM+VQ+BPE (\textit{+SFT}), which used additional scaling as described earlier, to observe how more instruction tuning affects performance in image captioning tasks.}
\resizebox{\textwidth}{!}{
\begin{tabular}{l|cccc|cccc}
\toprule[1pt]
\multirow{2}{*}{Model} & \multicolumn{4}{c|}{\textbf{Nocaps}} & \multicolumn{4}{c}{\textbf{Flickr30k}} \\
% \cline{2-9}
& CIDEr & SPICE & METEOR & ROUGE-L & CIDEr & SPICE & METEOR & ROUGE-L \\
\midrule
LLM+VQ & 76.4 & 17.2 & 24.4 & 51.3 & 76.4 & 16.2 & 24.4 & 51.3 \\
LLM+VQ+BPE & 75.7 & 16.9 & 24.1 & 50.8 & 75.7 & 15.9 & 24.1 & 50.8 \\
\hline
LLM+VQ+BPE (\textit{+SFT}) & 80.7 & 18.2 & 25.0 & 52.6 & 80.7 & 17.2 & 25.0 & 52.6 \\
\bottomrule[1pt]
\end{tabular}
}
\label{tab:eval-caption}
\end{table}

\newpage

\section{Additional Discussion about Further Scaling}

Our methods with BPE Image Tokenizer represent a departure from conventional CLIP-based visual encoding methods, introducing both challenges and opportunities that warrant careful discussion.

A critical aspect of evaluating our approach is understanding the implicit data scale disparity between our method and existing CLIP-based MLLMs. Traditional visual encoders typically leverage massive pretraining datasets—CLIP's original implementation utilized 400M image-text pairs \citep{radford2021learning}, while subsequent models expanded to datasets like LAION-2B, processing billions of samples across multiple epochs. In contrast, our BPE Image Tokenizer achieves competitive performance using approximately 2.78M images, representing roughly 0.1\% of the training data used by encoders like CLIP-ViT-L/14.

This substantial difference in pre-training data requirements highlights a fundamental trade-off in MLLM development: while CLIP-based approaches benefit from extensive pre-training, our method demonstrates remarkable efficiency in learning visual representations from limited data. The comparable performance achieved with significantly fewer resources suggests that our approach may offer a more scalable path forward for MLLM development, particularly in scenarios where massive datasets or computational resources are not available.

While our current implementation may not achieve state-of-the-art performance compared to heavily optimized CLIP-based MLLMs, this limitation should be viewed within the broader context of our research objectives. The primary contribution of this work lies not in establishing a new performance benchmark, but rather in introducing and validating a novel training paradigm for MLLMs. The theoretical foundations and experimental results presented here serve as proof-of-concept for an alternative approach to visual-language modeling.

We acknowledge that achieving state-of-the-art performance would require:

\begin{itemize}
    \item Substantial expansion of training data
    \item Significant computational resources for large-scale training
    \item Extensive engineering optimizations
\end{itemize}

These requirements extend beyond the scope of our current investigation, which focuses on establishing the theoretical and practical viability of our approach.

The promising results obtained with limited resources suggest exciting potential for future research. We believe our work lays the groundwork for a new direction in MLLM research, demonstrating that alternative approaches to visual-language modeling can achieve competitive performance even with limited resources. As the field continues to evolve, the principles and methods introduced in this paper may contribute to the development of more efficient and scalable MLLMs. Our future work will focus on scaling up these initial findings while preserving the core benefits of our approach.

\newpage

\section{Examples}

\begin{table}[ht]
\centering
\caption{Output examples of models trained with and without using BPE tokenizer. The text marked in red has obvious hallucinations.}
\label{tab:cases}
\resizebox{\columnwidth}{!}{
\begin{tabular}{cp{4cm}p{4cm}}
\toprule
\textbf{Case} & \textbf{LLM+VQ+BPE} & \textbf{LLM+VQ} \\ 
\midrule
\textbf{Q:} Give a brief description of the image.\\
\raisebox{-0.9\totalheight}{\includegraphics[width=0.3\columnwidth]{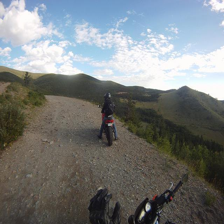}}
&
The image appears to be taken from the perspective of someone riding a bicycle or motorcycle on a dirt road or mountain trail. We can see handlebars in the foreground, suggesting this is a first-person view from the rider.
& 
The image features a motorcyclist riding on a dirt road, wearing a helmet and a black jacket. \red{They are surrounded by a group of people} who are taking pictures of the rider as they pass by. The motorcyclist appears to be enjoying the moment and capturing the memory. 
\\ 
\midrule
\textbf{Q:} What is in the photo?\\
\raisebox{-0.9\totalheight}{\includegraphics[width=0.3\columnwidth]{}}
&
This photo shows a workspace or home office setup. There are two computer monitors on a desk. The desk is positioned in front of a window. The desk appears to be an L-shaped or corner desk, with a clean and minimalist design. 
& 
In the photo, a desk is set up with a laptop positioned in the center, surrounded by a few other laptops and monitors. There is an empty black chair placed in front of the desk. \red{A potted plant is located on the left side of the scene}, creating a lively atmosphere.
\\ 
\bottomrule
\end{tabular}
}
\end{table}

As shown in Table \ref{tab:cases}, in these two cases, the version without using the BPE tokenizer, which directly uses VQ-quantized tokens for \texttt{Llama-3.1-8B} training, frequently exhibits hallucination issues. In contrast, the model trained with the BPE tokenizer can more accurately describe the information in the images, showcasing better image comprehension capabilities.

\section{Broader Impact}

The development of our proposed novel paradigm for training MLLMs has the potential to advance artificial intelligence and its applications across various domains, including medical imaging and autonomous systems. However, this progress also raises important ethical considerations, such as privacy concerns and the potential for bias in AI systems. As this technology evolves, it is crucial to balance innovation with responsible development, necessitating collaboration between researchers, ethicists, and policymakers. Future work should not only focus on technical improvements but also on ensuring that these advancements benefit society while mitigating potential risks. This research thus opens new avenues for AI capabilities while underscoring the importance of ethical considerations in technological progress.

\newpage

\section{Computing Resources}

We list the hardware resources used in Table \ref{tab:computing-resources}.

\begin{table}[ht]
\centering
\caption{Computing resources for our experiments.}
\begin{tabular}{ccc}
\toprule[1pt]
\textbf{CPU }                     & \textbf{GPU}                      & \textbf{RAM}   \\
\midrule[1pt]
Intel 3GHz $\times$ 64 & Nvidia A800 (80GB) $\times$ 8 & 1024GB\\
\bottomrule[1pt]
\end{tabular}
\label{tab:computing-resources}
\end{table}

\section{Licenses}

In our code, we have used the following libraries which are covered by the corresponding licenses:

\begin{itemize}
    \item Numpy (BSD-3-Clause license)
    \item PyTorch (BSD-3-Clause license)
    \item Transformers (Apache license)
    \item Numba (BSD-2-Clause license)
\end{itemize}

\end{document}